\documentclass[10pt,twocolumn,letterpaper]{article}

\usepackage[pagenumbers]{cvpr} 

\definecolor{cvprblue}{rgb}{0.21,0.49,0.74}
\usepackage[pagebackref,breaklinks,colorlinks,allcolors=cvprblue]{hyperref}

\usepackage{algorithm}
\usepackage{algorithmic}
\usepackage{graphicx}
\usepackage{tabularx}
\usepackage{makecell}
\usepackage{xcolor}
\usepackage{bm}
\usepackage{amsmath} 
\usepackage{multirow}

\newtheorem{criterion}{Criterion}
\newtheorem{theorem}{Theorem}
\newtheorem{principle}{Principle}
\usepackage{amssymb}

\def\paperID{30566} 
\def\confName{CVPR}
\def\confYear{2026}

\title{In Search of Lost DNA Sequence Pretraining}
\author{Zhijiang Tang$^{1,2*}$, Jiaxin Qi$^{1,2*}$, Yan Cui$^{2}$, Jinli Ou$^{2}$, Yuhua Zheng$^{2}$, Jianqiang Huang$^{1,2\dagger}$ \\
$^1$Computer Network Information Center, Chinese Academy of Sciences, Beijing, China\\
$^2$Hangzhou Institute for Advanced Study, University of Chinese Academy of Sciences, Zhejiang, China\\
{\tt\small tangzhijiang24@mails.ucas.ac.cn, jxqi@cnic.cn, cuiyan.ch@gmail.com,} \\
{\tt\small oujinli@zuaa.zju.edu.cn, zhengyuhua@ucas.ac.cn, jqhuang@cnic.cn} \\
}

\begin{document}
\maketitle
\footnotetext[1]{Equal contribution. }
\footnotetext[2]{Corresponding authors.}
\begin{abstract}
DNA sequence encoding is fundamental to gene function prediction, protein synthesis, and diverse downstream biological tasks. 
Despite the substantial progress achieved by large-scale DNA sequence pretraining, 
existing studies have overwhelmingly emphasized pretraining scale and custom downstream evaluation datasets, 
while neglecting some essential components of the pretraining paradigm.
In this paper, we reveal three critical yet heretofore overlooked problems in DNA pretraining: inappropriate downstream datasets, inherent flaws in the neighbor-masking strategy, and the lack of detailed discussion on vocabulary. 
Therefore, we undertake comprehensive investigations and propose principled guidelines, including selection criteria for evaluation datasets, guiding task design, and in-depth vocabulary analysis.
Extensive experiments validate the significance of our identified problems and support the rationale behind our recommendations. Finally, we introduce a standardized testbed that enables reproducible and rigorous benchmarking of DNA pretraining methods to advance the development of genomic foundation models.

\end{abstract}    
\section{Introduction}


Encoding DNA sequences---``the language of life''---is essential for a wide range of downstream biological tasks, such as regulatory element identification~\cite{encode2012integrated} and functional variant prediction~\cite{zhou2015predicting}. 
Motivated by the success of large‐scale pretraining in natural language processing, recent works have introduced similar strategies for DNA representation learning, reporting notable gains on their downstream benchmarks~\cite{dnabert,hyenadna,evo2}.

\begin{figure}[!t]
    \centering
    \includegraphics[width=\linewidth]{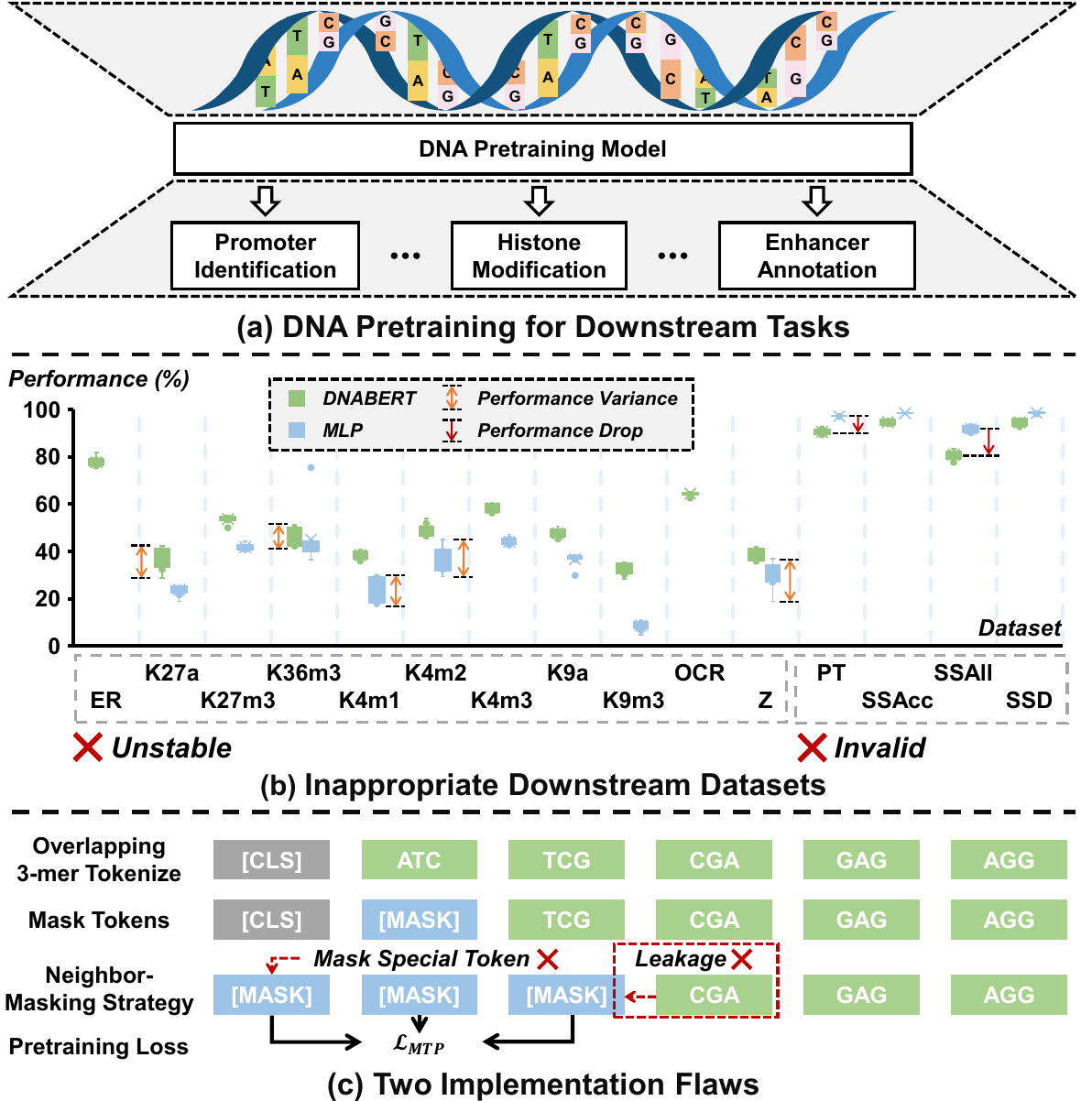}
    \caption{(a) Overview of DNA pretraining and its downstream applications.
    (b) Illustrations of model performance on inappropriate downstream datasets. (c) Two implementation flaws in neighbor-masking~\cite{dnabert}: information leakage and incorrect masking of special tokens.
    }
    \label{fig:teaser}
\end{figure}

However, existing studies have primarily focused either on scaling pretraining---by increasing data volume~\cite{evo,evo2} or expanding model capacity~\cite{NucleotideTransformer}---or on designing tailored downstream evaluation benchmarks~\cite{bend}, while overlooking fundamental principles of DNA pretraining. For example, as shown in Figure~\ref{fig:teaser},
The pretrained model underperforms a simple baseline across several custom downstream datasets,
and more concerningly, the widely adopted neighbor-masking strategy exhibits implementation flaws. 
In this paper, through in‑depth analysis and extensive experimental studies, we uncover three critical yet underexplored issues in current DNA pretraining paradigms: (a) the use of inappropriate downstream evaluation datasets, (b) implementation flaws in the neighbor‑masking strategy and the ensuing challenges when these flaws are addressed, and (c) the absence of comprehensive discussions on DNA vocabulary construction. Building on these insights, we propose concrete recommendations to mitigate these issues and establish better practices for robust DNA pretraining.


\textit{(a) Inappropriate Downstream Datasets}. Unlike pretraining in natural language processing, where generated text provides humans with an intuitive measure of model quality, DNA sequences consist of obscure nucleotides (i.e., A, T, C, G), and consequently, the effectiveness of DNA pretraining models can be evaluated only through downstream tasks.
Surprisingly, existing methods largely overlook this critical context and often resort to indiscriminate use of biologist‑provided datasets without careful selection. For example, as shown in Figure~\ref{fig:teaser}(b), we find that some downstream benchmarks exhibit high instability: identical experiments with different random seeds produce substantial variance, likely due to noisy or poorly curated data. Furthermore, in some downstream datasets, large‑scale pretrained models even fail to outperform simple end‑to‑end baselines, and some datasets contradict the established pretraining scaling law (i.e., increased pretraining scale should improve performance), suggesting potentially overly simplistic or biased data in these benchmarks. Such phenomena severely undermine the validity of model evaluations and raise significant concerns regarding the claimed advantages of current pretraining methods. 

To address this issue, we introduce two objective criteria for selecting downstream benchmarks: (1) stability, which ensures consistent model performance across multiple runs; and (2) validity, which requires that pretrained models outperform simple downstream baselines and adhere to the scaling law. Additionally, we formalize clear filtering guidelines based on these criteria and conduct extensive experiments, retaining only those datasets that reliably and meaningfully evaluate DNA pretraining methods.


\textit{(b) Implementation Flaws in Neighbor-Masking}.
Since DNA is composed of only four nucleotides, it is typically tokenized using the $k$-mer strategy~\cite{sievers2017k}. For example, ``ATC'' is treated as a single token in the 3-mer DNA tokenizer. 
To prevent word-like splitting from disrupting the inherent meaning of the DNA sequence, the $k$-mer strategy is often implemented together with overlapping windows~\cite{BarcodeBERT}. 
However, this leads to information leakage between neighbor tokens when applying the masked token prediction strategy during training. As shown in Figure~\ref{fig:teaser}(c), when encoding ``ATCG'' into two overlapping tokens ``ATC'' and ``TCG'', masking ``ATC'' as prediction target and providing ``TCG'', the overlapping ``TC'' will leak to the model. To address this problem, a neighbor-masking strategy has been proposed~\cite{dnabert}, in which both the token and its neighbors are masked.
However, we identify two critical flaws in its current implementation, illustrated in Figure~\ref{fig:teaser}(c): (1) incomplete neighbor-masking, where masked tokens at the edges still suffer from leakage when used as prediction targets; and (2) incorrect masking of special tokens, where code errors result in special tokens, such as the [CLS] token, being mistakenly masked, which is meaningless for learning in DNA pretraining.

Interestingly, we find that these flaws coincidentally facilitate DNA pretraining, as correcting them leads to model convergence failure under the same experimental settings. This may be because the leakage and meaningless special token prediction reduce the training complexity. Therefore, we introduce the guiding task strategy to reveal the nature of such phenomena. Further experiments are conducted to validate the rationale behind our findings, demonstrating that our proposed strategy realizes better model optimization. 

\textit{(c) Vocabulary Construction}. Current approaches to constructing DNA vocabularies directly adopt methods from natural language processing, such as n-grams~\cite{manning1999foundations} and byte pair encoding~(BPE)~\cite{bpe}. However, there are inherent differences between DNA sequences and language. In natural language, vocabularies are built around semantics, whether by merging multiple words in n-grams or identifying sub-words in BPE. In contrast, DNA sequences are composed of nucleotides, which can be viewed as characters in NLP, but do not inherently carry semantic meaning. Moreover, the semantic units in DNA sequences remain poorly defined and understudied, necessitating more in-depth exploration. Thus, we conducted a systematic analysis of existing DNA vocabularies based on our proposed testbed, performing a fine-grained investigation into token frequencies and learning difficulties. From this, we distilled several principles, which we hope will guide future research on DNA semantic vocabulary construction.

Our main contributions can be summarized as follows:
\begin{enumerate}
    \item Through comprehensive analysis, we identify three critical yet overlooked issues in current DNA pretraining: (a) inappropriate downstream datasets, (b) implementation flaws in neighbor-masking, and (c) insufficient discussion on vocabulary construction.
    \item We propose corresponding solutions: (a) two criteria for dataset selection, (b) the guiding task strategy, and (c) principles for DNA vocabulary construction.    
    \item We conduct extensive comparative experiments, and the results validate the rationale behind our findings and the effectiveness of our recommendations. Our proposed reproducible and robust testbed provides a solid foundation for advancing DNA pretraining research.

\end{enumerate}

\noindent\textbf{Code: }\href{https://github.com/ZhijiangTang/DNA-Tricks}{https://github.com/ZhijiangTang/DNA-Tricks}


\section{Related Works}
\noindent\textbf{DNA Downstream Evaluation.}
DNA downstream evaluation encompasses a broad spectrum of biological tasks. Functional region prediction seeks to identify significant genomic regions, such as CpG islands~\cite{gardiner1987cpg}, and datasets are typically derived from initiatives like FANTOM5~\cite{FANTOM5} and GENCODE~\cite{GENCODE}.
Histone modification identification~\cite{kouzarides2007chromatin} relies on databases such as JASPAR~\cite{JASPAR} and ChIP-Atlas~\cite{ChIPAtlas}.
Scattered downstream tasks are also consolidated into benchmarks. The BEND benchmark~\cite{bend} defines several representative tasks. The Nucleotide Transformer benchmark~\cite{NucleotideTransformer} is designed to evaluate the Nucleotide Transformer, covering tasks such as regulatory annotation~\cite{encode2012integrated}. The Genomic Benchmark~\cite{Genomicbenchmarks} provides a series of datasets for classification.
However, these widely used benchmarks generally lack in-depth analyses of their effectiveness for evaluating pretraining models. In this paper, we systematically study existing datasets and propose two criteria for selecting appropriate downstream benchmarks for DNA pretraining evaluation.

\noindent\textbf{DNA Pretraining Models.}
With the great success of pretraining models in NLP, more researchers have begun pretraining DNA models to improve performance on downstream tasks. DNAGPT~\cite{zhang2023dnagpt} uses the non-overlapping $k$-mer tokenizer and introduces a GPT-style pretraining paradigm~\cite{radford2018improving}. Nucleotide Transformer~\cite{NucleotideTransformer} also adopts the non-overlapping $k$-mer tokenizer and identifies the optimal model from among BERT-based architectures~\cite{devlin2019bert}.
DNABERT~\cite{dnabert}, BarcodeBERT~\cite{BarcodeBERT}, and GENERator~\cite{GENERator} introduce the BERT architecture into DNA sequence modeling and use overlapping $k$-mer. DNABERT-2~\cite{dnabert2} reconstructs the vocabulary using BPE. Evo~\cite{evo} builds a large-scale DNA pretraining model by hybridizing HyenaDNA~\cite{hyenadna} and Roformer~\cite{su2024roformer}. Evo2~\cite{evo2} further extends the architecture using multi-scale hyena operators~\cite{ku2025systems} and expands the pretraining dataset.
These methods focus on scaling up pretraining or architectural improvements, which leads them to overlook critical yet essential issues in DNA pretraining. In this paper, we identify these issues and provide corresponding recommendations, with extensive experiments to demonstrate the effectiveness of our solutions.


\section{Method}

\subsection{DNA Sequence Pretraining}

The basic element of DNA, nucleotide, is denoted as \(c \!\in\! \mathcal{S}_c\!=\! \{A, T, G, C\}\), and a DNA sequence can be represented as the ordered nucleotide series \( \bm{x} \!=\! (c_1, c_2, \dots, c_n) \). Instead of encoding individual nucleotides, current methods rely on custom vocabularies $\mathcal{V}$ to tokenize DNA sequences.
For example, the $k$-mer vocabulary~\cite{dnabert} is defined as $\mathcal{V}_{kmer}\!=\!\{c_1c_2\cdots c_k \,|\, \forall i \!\in\!\{1,\dots,k\},c_i\!\in\!\mathcal{S}_c\}\cup \mathcal{S}_{sp}$, where $k$ is the number of nucleotides in a token and $\mathcal{S}_{sp}$ is the set of special tokens such as $[\text{CLS}]$ and $[\text{MASK}]$. 

Given token vocabulary $\mathcal{V}$, DNA sequence $\bm{x}$ can be further represented as \( \bm{x} \!=\! (t_1, t_2, \dots, t_m), t_i\!\in\!\mathcal{V}\). Following the masked token prediction (MTP) pipeline in natural language pretraining~\cite{devlin2019bert},
given the dataset $\mathcal{D}\!=\!\{\bm{x}\}_{i=1}^N$, 
the input is masked $\bm{x}$, where some tokens are randomly masked, denoted as \({\bm{\tilde{x}}} = (t_{1}, [\text{MASK}], \ldots, t_{m})\), 
the set of masked tokens for prediction in $\bm{x}$ is denoted as $\mathcal{M}$. Then, the training objective for $\bm{x}$ can be written as:
\begin{equation}
\begin{aligned}
\label{eq:base_objective}
    &\mathcal{L}_{\text{MTP}}(\bm{\tilde{x}},\mathcal{M},\mathcal{V})\\
    &= -\frac{1}{|\mathcal{M}|} \sum\limits_{t_i\in \mathcal{M}} \bm{y}_{i} \log ( \frac{\exp(\bm{e}_{i} \bm{w})}{\sum_{k} \exp((\bm{e}_{i} \bm{w})_k)} ),
\end{aligned}
\end{equation}
where $\bm{y}_{i}$ is the one-hot class label for $i$-th masked token in sample $\bm{x}$, $\bm{e}_{i} \in \mathbb{R}^{1 \times d}$ is the features of tokens $t_{i}$, $d$ is the hidden dimension, $\bm{w} \in \mathbb{R}^{d \times |V|}$ is a linear projection mapping hidden features to vocabulary logits, and $k\in\{1,\dots,|\mathcal{V}|\}$ indexes the vocabulary dimension. 
The total training objective for dataset $\mathcal{D}$ is defined as the summation of the loss in Eq.~\eqref{eq:base_objective}, omitted here for brevity.

\begin{algorithm}[tb]
\caption{Fixed neighbor-masking strategy}
\label{alg:dnabert_masking}
\textbf{Input}: Input $\bm{x}$, $k$-mer, masking probability $\textit{p}$\\
\textbf{Output}: Masked input $\bm{\tilde{x}}$ and labels $\mathcal{M}$\\
\vspace{-10pt}
\begin{algorithmic}[1]
\STATE \textcolor{red}{\textbf{[BUG]}} Initialize $Nbr=\{1-\left \lfloor \frac{k}{2} \right \rfloor ,\cdots,k-\left \lfloor \frac{k}{2} \right \rfloor\}_{k}$
\STATE \textcolor{green}{\textbf{[FIX]}} Initialize $Nbr=\{-k+1,\cdots,k-1\}_{2k-1}$ 
\STATE Randomly select tokens from $\bm{x}$ into $\mathcal{M}$ with $p$
\STATE Remove special tokens in $\mathcal{M}$
\STATE Set $\mathcal{M}_{\text{in}}=\mathcal{M}$, $\bm{\tilde{x}}={\bm{x}}$
\FOR{each token $t_i$ in $\mathcal{M}$}
    \FOR{each offset $j$ in $Nbr$}
        \IF{$0\le i+j<\operatorname{Length}({\bm{x}})$}
            \STATE $\mathcal{M}_{\text{in}}=\mathcal{M}_{\text{in}}\cup\{\tilde{\bm{x}}\,[i+j]\}$
            \STATE $\bm{\tilde{x}}\,[i+j]=\text{[MASK]}$
        \ENDIF
    \ENDFOR
\ENDFOR
\STATE \textcolor{red}{\textbf{[BUG]}} Set labels  $\mathcal{M}=\mathcal{M}_{\text{in}}$
\STATE \textcolor{green}{\textbf{[FIX]}} Set labels $\mathcal{M}=\mathcal{M}$
\STATE \textbf{return} Masked input tokens $\bm{\tilde{x}}$, labels $\mathcal{M}$
\end{algorithmic}
\end{algorithm}

\subsection{Two Downstream Dataset Selection Criteria}



As noted above, downstream performance is the only metric for evaluating DNA models, and unreliable benchmarks can lead to misleading conclusions and catastrophic resource waste. However, we observe that several current downstream benchmarks exhibit substantial variability in model performance across repeated runs. Therefore, we define the stability criterion, which selects the downstream dataset that consistently reflects a model’s true capabilities.


\begin{criterion}[Stability]
\label{cri:stability}
Let $\sigma_\mathcal{D}$ denote the standard deviation of the model performance on the downstream dataset \( \mathcal{D}\). Assume \( \log \sigma_{\mathcal{D}} \sim \mathcal{N}(\mu, \sigma^2) \).
We say that dataset \( \mathcal{D} \) satisfies the criterion of \emph{stability} if:
$$
\log \sigma_{\mathcal{D}} < \mu + \sigma.
$$
\end{criterion}



Moreover, we observe that on specific downstream benchmarks, the pretrained model underperforms a simple linear baseline trained from scratch, and that increasing the volume of training data can even degrade performance. To mitigate such unexpected phenomena, we define the validity criterion, which selects the downstream dataset that demonstrates the benefits of large‐scale pretraining.

\begin{criterion}[Validity]
\label{cri:validity}
A dataset \( \mathcal{D} \) is considered \emph{valid} if it satisfies the following two conditions:
\begin{enumerate}
    \item {Pretraining Benefit:} On dataset \( \mathcal{D} \), the performance of a pretrained model should be better than that of a linear model trained from scratch.  
    \item {Scaling Law:} The performance on dataset \( \mathcal{D} \) improves monotonically (or at least non-decreasingly) with increasing amounts of pretraining data.
\end{enumerate}
\end{criterion}


By applying our two selection criteria to widely used downstream datasets, we derive a standardized testbed that enables rigorous and reproducible benchmarking of DNA pretraining methods. This testbed underpins all subsequent analyses presented in this paper.


\begin{figure}
    \centering
    \includegraphics[width=1\linewidth]{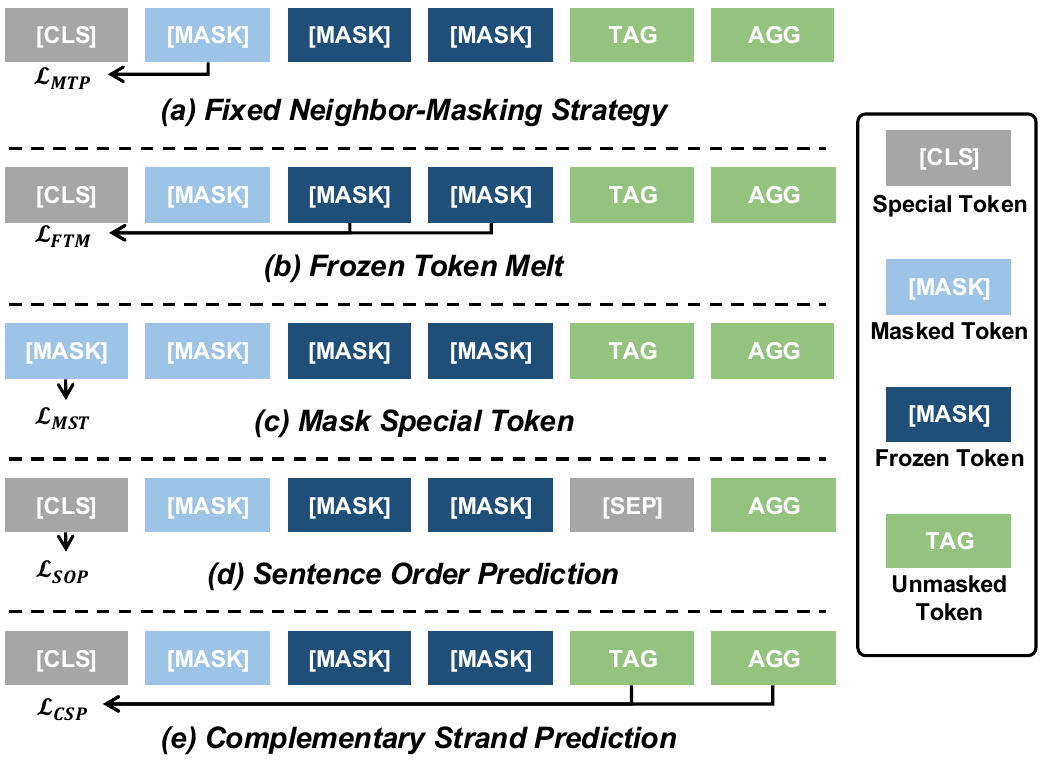}
    \caption{Illustrations of our guiding tasks. Frozen tokens are masked at input but excluded from the prediction targets to prevent information leakage. 
    }
    \label{fig:guiding_task}
    \vspace{-4pt}
\end{figure}

\subsection{Two Implementation Flaws and Guiding Tasks}
Overlapping $k$‑mer tokenization inherently introduces information leakage into the masked token prediction (as discussed above). The neighbor‑masking strategy is proposed to mitigate this by masking the target token $t_i$ together with its neighbors, such as $t_{i+1}$. However, since these neighbors remain included as masked targets in Eq.~\eqref{eq:base_objective}, which is considered an implementation flaw, leakage persists. 

Another implementation flaw of neighbor‑masking is that special tokens are inadvertently masked~\cite{dnabert}. In particular, when the first meaningful token is masked as the target, the preceding special token $[\text{CLS}]$ is mistakenly masked and included in the loss calculation, as shown in Figure~\ref{fig:teaser}(c), which produces meaningless training signals.
As shown in Algorithm~\ref{alg:dnabert_masking}, we fix the above two flaws and summarize the following principle for preventing leakage.
\begin{principle}
\label{pri:leakage}
In the overlapping $k$-mer tokenization implementation, each masked token contains partial information of its $k\!-\!1$ preceding and $k\!-\!1$ succeeding neighbors, which should be masked but excluded from the prediction targets.
\end{principle}



According to the principle, for the prediction targets, i.e., the masked tokens set $\mathcal{M}$, we could define the set of input masked token indices as:
\begin{gather}
    \mathcal{M}_{in} \!=\! \left\{ t_i \, |\ t_i \!\in\! \mathcal{T}_x, t_i \!\notin\! \mathcal{S}_{sp}, \exists\, t_j \!\in\! \mathcal{M}, |i - j| < k \right\},
\end{gather}
where $t_j$ is the masked token, and $\mathcal{S}_{sp}$ is the special token set, $\mathcal{T}_x$ is the set of all tokens in $\bm{x}$. 

Then we can correct the implementation flaws by updating the input sequence $\bm{\tilde{x}}$ 
using $\mathcal{M}_{\text{in}}$ to derive new masked input $\bm{\tilde{x}}'$: if $t_i \in \mathcal{M}_{\text{in}}$, then replace $t_i$ with $\text{[MASK]}$. Consequently, Eq.~\eqref{eq:base_objective} can be rewritten as:
\begin{gather}
\label{obj_ours}
\mathcal{L} = \mathcal{L}_{\text{MTP}}(\bm{\tilde{x}}',\mathcal{M},\mathcal{V}).
\end{gather}





Surprisingly, when we correct the implementation flaws with Eq.\eqref{obj_ours}, the model fails to converge under the same experimental conditions. We hypothesize that the original flaws inadvertently provided a form of optimization guidance, like lowering the learning rate or using a more suitable initialization. To capture this effect explicitly, we introduce the term \textbf{Guiding Task} and reformulate Eq.~\eqref{obj_ours} as:
\begin{gather}
\mathcal{L} = \mathcal{L}_{\text{MTP}}(\bm{\tilde{x}}',\mathcal{M},\mathcal{V}) + \mathcal{L}_{\text{G}},
\end{gather}
where \(\mathcal{L}_{\mathrm{G}}\) is the guiding task loss, which was coincidentally and implicitly realized by the implementation flaws. To validate this hypothesis, we explicitly formalize the two flaws and design novel guiding tasks that instantiate $\mathcal{L}_{\mathrm{G}}$.

(1) \textit{Frozen Tokens Melt} (FTM).
We formalize the implementation flaw of neighbor-masking leakage:
\begin{gather}
    \mathcal{L}_{\text{G}}=\mathcal{L}_{\text{MTP}}(\bm{\tilde{x}}',\mathcal{M}_{in} \!\setminus\! \mathcal{M},\mathcal{V}),
\end{gather}
where $\mathcal{M}_{in} \!\setminus\! \mathcal{M}$
is the set of target tokens containing leakage.

(2) \textit{Masking Special Token} (MST).
We formalize the implementation flaw of wrongly masking the special tokens:
\begin{gather}
    \mathcal{L}_{\text{G}}=\mathcal{L}_{\text{MTP}}(\bm{\tilde{x}}',\mathcal{S}_{sp},\mathcal{V}).
\end{gather}

(3) \textit{Sentence Order Prediction} (SOP).
We incorporate the SOP task~\cite{devlin2019bert} for guiding DNA pretraining:
\begin{gather}
    \mathcal{L}_{G}={y} \log \left( \sigma(\bm{e}_0\bm{w}) \right) + (1 \!-\! {y}) \log \left( 1\!-\!\sigma(\bm{e}_0\bm{w}) \right),
\end{gather}
where $\bm{y}$ is the binary label, $\sigma$ is the sigmoid function, $\bm{e}_0$ denotes the embedding of the special token [CLS].

\begin{table*}[!t]
  \centering
    \renewcommand{\arraystretch}{1.2}
    \scalebox{0.83}{
\begin{tabularx}{1.2\textwidth}{l|l|>{\centering\arraybackslash}X>{\centering\arraybackslash}X>{\centering\arraybackslash}X>{\centering\arraybackslash}X>{\centering\arraybackslash}X>{\centering\arraybackslash}X>{\centering\arraybackslash}X>{\centering\arraybackslash}X>{\centering\arraybackslash}X>{\centering\arraybackslash}X>{\centering\arraybackslash}X>{\centering\arraybackslash}X>{\centering\arraybackslash}X}
    \toprule
    \multicolumn{2}{c|}{\multirow{1}[0]{*}{Datasets}} & NET   & NE    & EE    & CA    & OCR   & K36m3 & ER    & Z     & K4m3  & K4m1  & K27m3 & K9a   & K4m2  \\
        \hline
    \multicolumn{1}{l|}{\multirow{2}[0]{*}{Evaluation}}&$\mu_{\mathcal{D}}$  & 71.87 & 82.9  & 72.27 & 71.48 & 64.14 & 42.04 & 77.41 & 40.53 & 59.61 & 38.79 & 53.19 & 48.61 & 50.71 \\
          &$\sigma_{\mathcal{D}}$& 0.07  & 0.08  & 0.29  & 0.3   & 0.43  & 0.53  & 1.26  & 1.28  & 1.37  & \textbf{\textcolor{red}{1.45 }} & \textbf{\textcolor{red}{2.85}}  & \textbf{\textcolor{red}{2.93}}  & \textbf{\textcolor{red}{3.81}}  \\
          \hline
    \multirow{2}[0]{*}{Pretrain} &$\mu_{\mathcal{D}}$& 82.97 & 71.53 & 72.34 & 71.55 & 64.37 & 53.68 & 77.71 & 39.12 & 58.31 & 38.5  & 47.09 & 47.7  & 48.76\\
         &$\sigma_{\mathcal{D}}$ & 0.07  & 0.26  & 0.51  & 0.1   & 0.34  & 0.35  & 0.22  & \textbf{\textcolor{red}{2.44}}  & \textbf{\textcolor{red}{1.5}}   & 1.23  & \textbf{\textcolor{red}{3.57}}  & 1.08  & \textbf{\textcolor{red}{1.66}}  \\
    \bottomrule
\end{tabularx}%
    }
  \caption{Test performance~(\%) (Accuracy, AUROC or MCC) on some downstream datasets. 
  Datasets with red numbers exhibit high performance variance, suggesting they are inappropriate for evaluating pretraining models according to Criterion~\ref{cri:stability}.
  }
  \label{tab:stability}%
\end{table*}


\begin{table*}[!t]
  \centering
    \renewcommand{\arraystretch}{1.2}
    \scalebox{0.83}{
\begin{tabularx}{1.2\textwidth}{l|>{\centering\arraybackslash}X>{\centering\arraybackslash}X>{\centering\arraybackslash}X>{\centering\arraybackslash}X>{\centering\arraybackslash}X>{\centering\arraybackslash}X>{\centering\arraybackslash}X>{\centering\arraybackslash}X>{\centering\arraybackslash}X>{\centering\arraybackslash}X>{\centering\arraybackslash}X>{\centering\arraybackslash}X>{\centering\arraybackslash}X}
    \toprule
    Datasets    & CpG   & HM    & EC    & NTP   & K27a  & K9m3  & K20m1 & PA    & PNT   & PT    & SSAcc & SSAll & SSD \\
    \hline
    CNN   & 80.19  &\underline{ 71.22 } & \underline{68.70} & 22.57  & 46.70  & 91.81  & 92.81  & 89.42  & 72.13  & 48.19  & 76.31   & \textbf{\textcolor{red} {91.29}} & 32.74   \\
    Unet  & \underline{85.30}  & 70.23  & 60.94  & \underline{25.90}  & 45.97  & \underline{93.19 } &\underline{ 93.70}  & \textbf{\textcolor{red}{97.56} } & 94.18  & 78.90  & \underline{96.00} & 54.60  & \textbf{\textcolor{red}{35.10} }  \\
    MLP   & 77.17  & 67.96  & 66.68   & 9.72  & \underline{51.86}  & 93.05  & 93.34  & \underline{97.48}  & \textbf{\textcolor{red}{98.62} } & \textbf{\textcolor{red}{93.35} } & \textbf{\textcolor{red}{98.69} } & 82.95  & 25.30 \\
    DNABERT & \textbf{89.67 } & \textbf{76.83 } & \textbf{71.69 } & \textbf{31.93 } & \textbf{55.16 } & \textbf{94.33 } & \textbf{94.98 } & 91.75  & \underline{95.51}  &\underline{ 81.84}  & 94.74  & \underline{83.39}  & \underline{32.83}  \\
    \bottomrule
\end{tabularx}%
    }
  \caption{Test performance~(\%) (Accuracy, AUROC or MCC) on some downstream datasets. Bold/red numbers indicate the best performance, underlined numbers indicate the runner-up. Datasets with red numbers indicate that the simple baseline surpasses the large-scale pretraining models, suggesting they are inappropriate for evaluating pretraining methods. 
  }
  \label{tab:validity}%
  \vspace{12pt}
\end{table*}

(4) \textit{Complementary Strand Prediction} (CSP).
To prevent perturbing the embedding of masked tokens, we propose CSP as the guiding task, which requires the model to predict the reverse complement of unmasked tokens:
\begin{gather}
\mathcal{L}_{\text{G}}=\mathcal{L}_{\text{MTP}}(\bm{\tilde{x}}',\mathcal{T}_x\!\setminus\!\mathcal{M},\bar{\mathcal{V}}),
\end{gather}
where $\bar{\mathcal{V}}\!=\!\left\{ \operatorname{RC}(t)\,\middle|\,t\in\mathcal{V}\right\}$, and $\operatorname{RC}$ computes the reverse complement of $t$, i.e., $A\!\to\! T$, $T\!\to\! A$, $C\!\to\! G$, $G\!\to\! C$.

Figure~\ref{fig:guiding_task} illustrates our guiding tasks. Note that, although our guiding tasks are derived from $k$-mer tokenization, they are applicable (except FTM) to any pretraining framework that employs masked token prediction.

\subsection{Vocabulary Construction}
Besides $k$-mer approach, the Word tokenizer---essentially a non-overlapping $k$-mer tokenizer---is widely used~\cite{NucleotideTransformer}, and thus shares the same vocabulary as $k$-mer. Moreover, Byte Pair Encoding (BPE) \cite{bpe} has recently been applied to DNA pretraining\cite{dnabert2}, which is defined as follows:
\begin{gather}
    \mathcal{V}_{\text{BPE}}=\mathcal{S}_c\cup\{ a_i\circ b_i\,|\,i=1,2,\cdots,n\}\cup \mathcal{S}_{sp},
\end{gather}
where $a_i \circ b_i$ denotes a new token generated by the BPE merge rule, and $n$ is the number of BPE merge operations.

Based on our extensive experiments and in-depth analyses, we propose two principles for DNA vocabularies.

\begin{principle}[Tokenization Selection]
\label{pri:tokenization}
    Employ tokenization strategies that preserve sequence continuity, e.g., BPE instead of Word, and select an appropriate vocabulary size to maintain local context and potential motif structures without disrupting biological information.
\end{principle}

\begin{principle}[Token Selection]
\label{pri:token}
Not all tokens generated by NLP-based methods are beneficial, and the following two guidelines help retain only the effective ones:
\begin{enumerate}
    \item Retain high-frequency and information-rich tokens, as they often correspond to biologically meaningful fragments and make important positive contributions.
    \item Remove low-frequency or high-entropy tokens, which often represent noise or overfitting to specific fragments, to reduce interference and improve generalization.
\end{enumerate}
\end{principle}

\begin{figure}
    \centering
    \includegraphics[width=1\linewidth]{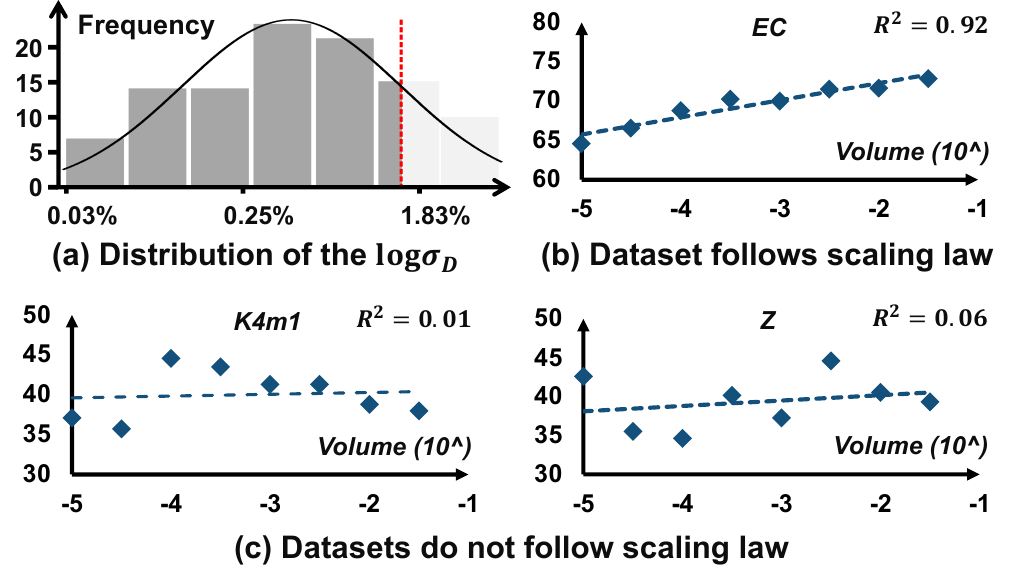}
    \caption{Illustrations of our downstream dataset selection criteria. (a) Distribution of $\log\sigma_{\mathcal{D}}$, where $\sigma_{\mathcal{D}}$ denotes the standard deviation of DNABERT. The distribution follows a normal distribution (i.e., Shapiro-Wilk test~\cite{shaphiro1965analysis} p-value is 0.15). We excluded datasets with $\sigma_{\mathcal{D}} > \exp(\mu+\sigma) = 1.41\%$ based on Criterion~\ref{cri:stability}. (b) A downstream dataset follows the scaling law. (c) Two downstream datasets that do not follow the scaling law. More details are provided in the supplementary material.}
    \label{fig:criteria}
    \vspace{-16pt}
\end{figure}

\begin{table*}[!t]
  \centering
    \renewcommand{\arraystretch}{1.2}
    \scalebox{1}{
\begin{tabularx}{1\textwidth}{l|>{\centering\arraybackslash}X>{\centering\arraybackslash}X>{\centering\arraybackslash}X>{\centering\arraybackslash}X>{\centering\arraybackslash}X>{\centering\arraybackslash}X>{\centering\arraybackslash}X>{\centering\arraybackslash}X}
    \toprule
    Model & CA    & CpG   & HM    & EC    & NE    & PA    & PNT   & Average \\
    \midrule
    HyenaDNA & \underline{71.13} & \underline{84.59} & 75.07 & 68.24 & \underline{80.68} & \underline{93.62} & 94.1  & 81.06 \\
    DNABERT2 & 68.8  & 85.42 & \underline{75.3}  & \textbf{72.78} & 79.97 & 93.49 & \underline{94.27 }& \underline{81.43 }\\
    DNABERT & \textbf{71.28} & \textbf{89.6} & \textbf{76.67} & \underline{71.45} & \textbf{82.88} & \textbf{94.25} & \textbf{95.14} & \textbf{83.04} \\
    \bottomrule
\end{tabularx}%
    }
  \caption{Test performance~(\%) (Accuracy, AUROC or MCC) of SOTA DNA pretraining model on selected downstream datasets. Bold numbers indicate the best performance, underlined numbers indicate the runner-up. 
  }
  \label{tab:performance}%
\end{table*}

\underline

\section{Experiments}
\subsection{Downstream Datasets}
The following introduces the downstream datasets that are widely used for evaluating DNA pretraining models.  

\noindent\textbf{BEND Benchmark}~\cite{bend} introduces multiple downstream DNA tasks and collects the corresponding datasets. We select three supervised tasks: 
\begin{itemize}
    \item Chromatin accessibility prediction~(CA) needs to predict chromatin accessibility across multiple cell types. 
    \item Histone modification~(HM) prediction requires predicting whether specific histone modifications exist. 
    \item CpG methylation prediction~(CpG) requires predicting whether specific sites are methylated in different cells. 
\end{itemize}
We use Area Under the Receiver Operating Characteristic Curve (AUROC) ~\cite{fawcett2006introduction} as the evaluation metric.

\noindent\textbf{Genomic Benchmark}~\cite{Genomicbenchmarks} includes multiple datasets, which can be categorized into two tasks:
\begin{itemize}
    \item Regulatory annotation aims to identify and annotate regulatory elements in DNA sequences, such as promoters and enhancers. The dataset includes Enhancers Cohn~(EC), Enhancers Ensembl~(EE), Ensembl Regulatory~(ER), and NonTata Promoters~(NTP).  
    \item Chromatin accessibility prediction: its dataset, namely OCR Ensembl~(OCR).
\end{itemize}
Following previous works~\cite{hyenadna,MxDNA}, we use accuracy as the evaluation metric.

\noindent\textbf{Nucleotide Transformer Benchmark.}~\cite{NucleotideTransformer} includes three types of tasks:  
\begin{itemize}
    \item Histone modification prediction: This benchmark includes 10 datasets, namely H2AFZ (Z), H3K27ac (K27a), H3K27me3 (K27m3), H3K36me3 (K36m3), H3K4me1 (K4m1), H3K4me2 (K4m2), H3K4me3 (K4m3), H3K9ac (K9a), H3K9me3 (K9m3), and H4K20me1 (K20m1), using Matthews Correlation Coefficient~(MCC)~\cite{gorodkin2004comparing} as the evaluation metric.
    
    \item Regulatory annotation: This benchmark includes five datasets, namely Enhancers (NE), Enhancers Types (NET), Promoter All (PA), Promoter NoTata (PNT), and Promoter Tata (PT). Except for the NET evaluation metric, which uses accuracy, all other tasks use AUROC.
    \item Splice site annotation needs to identify sites at exon-intron boundaries. Datasets include Splice Sites Acceptor~(SSAcc), Splice Sites All~(SSAll), and Splice Sites Donor~(SSD). Except for the SSAll evaluation metric, which uses accuracy, all other tasks use AUROC.

\end{itemize}

More detailed information about each dataset can be found in the supplementary materials.

\subsection{Implementation Details}
{(a) Criterion.} We obtain the estimates of the mean $\mu$ and variance $\sigma^2$ of $\log(\sigma_{\mathcal{D}})$ through extensive experiments. We evaluate the performance of three baselines: Multilayer Perceptrons (MLP)~\cite{MLP}, Convolutional Neural Networks (CNN)~\cite{bend}, and Unet~\cite{Unet}, and compare them with DNABERT to verify pretraining benefit. We model the scaling law using linear regression~\cite{hastie2009elements}. 
{(b) Guiding Task.}
We conduct experiments based on the fixed version of DNABERT with different guiding tasks, and include the Kaiming initialization~\cite{he2015delving} method for comparison. The probability of a reverse in SOP is set to 0.01.
{(c) Vocabulary Construction.}
All BPE vocabularies are trained on the human genome~\cite{GRCh38p14}. In the token ablation experiments, the removed tokens are replaced with the new $\text{[CUL]}$, noting that $\text{[CUL]}$ is not a special token. And the number of removed tokens does not exceed 10\% of the original.

We pretrain and test state-of-the-art DNA pretraining models. DNABERT~\cite{dnabert}, HyenaDNA~\cite{hyenadna}, and DNABERT2~\cite{dnabert2} use a hidden dimension of $d=256$ with a total of 8 Transformer layers. 
All pretrainings use human genome data and were conducted on NVIDIA A800 GPUs with a batch size of 64 over 100 epochs. We use the Adam optimizer~\cite{kingma2014adam} for optimization with a learning rate of $5 \times 10^{-4}$. The masking probability is 0.11.
All test experiments were conducted on NVIDIA A800 GPUs with 100 epochs of three different seeds, using the Adam optimizer. Following prior work~\cite{bend}, we freeze the pretraining model's backbone and only fine-tune a CNN with a learning rate of $5 \times 10^{-4}$. Other models undergo full training with multiple learning rates $\{1\!\times\!10^{-3}, 5\!\times\!10^{-4}, 1\!\times\!10^{-4}\}$. Different datasets use different batch sizes. More details are provided in the supplementary material.

\begin{figure}
    \centering
    \includegraphics[width=1\linewidth]{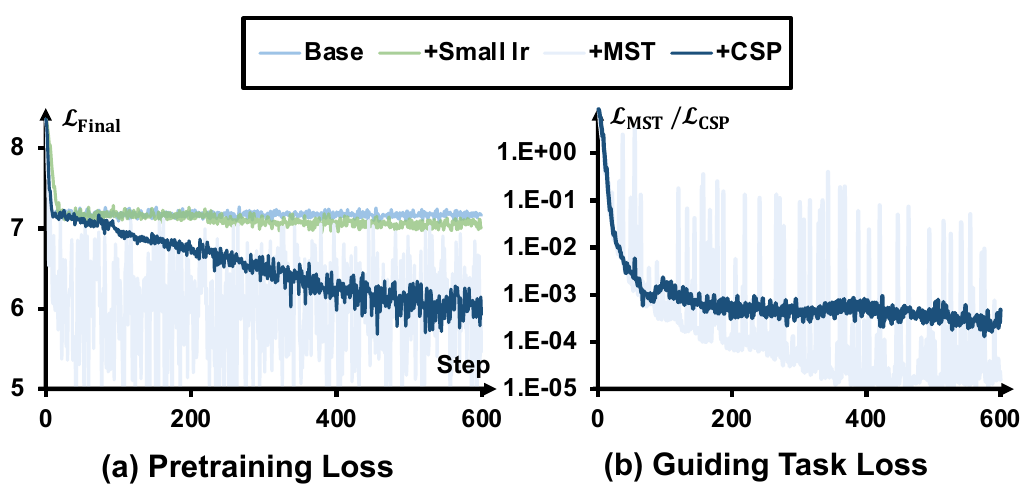}
    \caption{
    Visualization of Pretraining loss curves for different guiding tasks and the comparison of two guiding task losses; ``Base'' denotes DNABERT with a BPE tokenizer.
    }
    \label{fig:guiding_task_loss}
\end{figure}

\begin{table}[!t]
  \centering
    \renewcommand{\arraystretch}{1.2}
    \scalebox{1}{
\begin{tabularx}{0.46\textwidth}{l|>{\centering\arraybackslash}X>{\centering\arraybackslash}X>{\centering\arraybackslash}X>{\centering\arraybackslash}X}
    \toprule
    Guiding Task& $k$-mer & Word  & BPE   & Average \\
    \midrule
     Base  & 50.45 & 64.7  & 54.38 & 56.51 \\
        +Small lr  & 82.1  & 80.38 & 79.57 & 80.68 \\
          +Kaiming & 81.99 & 54.23 & 50.54 & 62.25 \\
          +FTM & \textbf{82.93} & -     & -     & - \\
          +MST & 82.32 & 81.06 & \underline{80.28} & \underline{81.22} \\
           +SOP  & 82.33 & \underline{81.31} & 51.31 & 71.65 \\
          +CSP  & \underline{82.69} & \textbf{81.55} & \textbf{82.05} & \textbf{82.09} \\
    \bottomrule
\end{tabularx}%
    }
  \caption{Average test performance~(\%) across different tokenizers and guiding tasks. 
  Base model with $k$-mer refers to DNABERT with its implementation flaws corrected.
  }
  \label{tab:guiding_task}%
\end{table}

\subsection{Result Analysis}
\noindent\textbf{Q1.} \textbf{\textit{Are these two criteria useful, and how can we select stable and valid downstream datasets?} }

\noindent\textbf{A1.} 
As shown in Table~\ref{tab:stability}, the standard deviation of performance on the K27m3 dataset reaches 2.85\%, while on K36m3, it even reaches 3.57\% under different pretraining initializations, indicating significant instability in experimental reproducibility. In addition, Table~\ref{tab:validity} shows validity issues in some datasets, e.g., a simple CNN outperforms pretraining models by 7.9\% in the NTP dataset. These phenomena suggest that some downstream datasets fail to fairly and validly reflect the capability of pretraining models.

We filtered datasets based on two criteria: stability and validity. As shown in Figure~\ref{fig:criteria}(a), the logarithm of the standard deviation distribution follows a normal distribution, with a Shapiro-Wilk test~\cite{shaphiro1965analysis} p-value of 0.15. According to the stability criterion~\ref{cri:stability}, we selected datasets with their standard deviation less than 1.41\%. Secondly, according to Criterion~\ref{cri:validity} and Table~\ref{tab:validity}, we removed datasets on which end-to-end models perform best (including NTP, K27a, PT, SSAcc, SSAll, SSD). As shown in Figure~\ref{fig:criteria}, according to the scaling law, we retained tasks with a positive slope and coefficient of determination $R^2 > 0.4$~\cite{hastie2009elements}. Finally, we selected seven datasets that are both stable and valid: CA, CpG, HM, EC, NE, PA, and PNT.

These seven datasets not only exhibit good properties but also cover various DNA downstream tasks, e.g., CA for chromatin accessibility prediction, HM for histone modification prediction, etc. As shown in Table~\ref{tab:performance}, the average performance of DNABERT, DNABERT2, and HyenaDNA on the seven datasets is 83.0\%, 81.4\%, and 81.1\%, respectively. The complete evaluation results are provided in the supplementary materials.

\noindent\textbf{Q2.} \textbf{\textit{Why is it necessary to use guiding tasks during DNA pretraining, and how does it work?} }

\noindent\textbf{A2.} 
As shown in Table~\ref{tab:guiding_task}, the base model with $k$-mer performs poorly due to its failure to converge. After introducing the FTM and MST, both of which originate from neighbor-masking flaws, the model performance improves by 32.48\% and 31.87\%, respectively. These flaws surprisingly assist convergence.
\textbf{In fact, difficulty converging during pretraining is commonly observed in DNA models.} As shown in Table~\ref{tab:guiding_task}, the base models with Word and BPE tokenizers also exhibit poor performance. Introducing guiding tasks significantly improves model performance; for instance, after applying the MST, the Word and BPE variants achieve 16.36\% and 25.9\% improvements over the base model, respectively. Pretraining loss curves in Figure~\ref{fig:guiding_task_loss}(a) further confirm that the base model faces difficulty in convergence, but shows stable convergence behavior after introducing guiding tasks. Therefore, it is necessary to incorporate guiding tasks during DNA pretraining to facilitate convergence.

\begin{figure}[!t]
    \centering
    \includegraphics[width=1\linewidth]{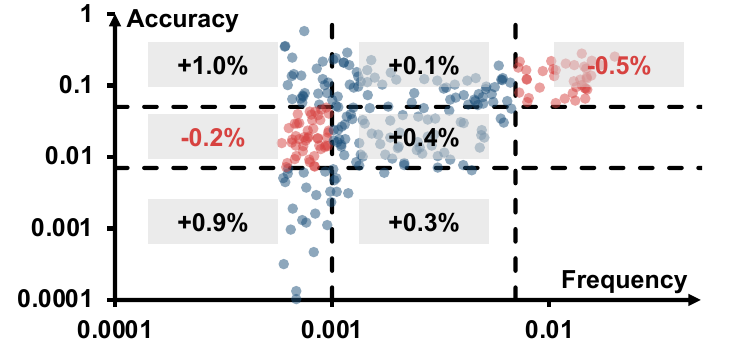}
    \caption{Visualization of token importance in the BPE vocabulary. The vocabulary size is 256, excluding special tokens. ``Accuracy'' refers to the token prediction accuracy of the model. The metric in each block indicates the improvement after removing the block's tokens compared to the base. The base's vocabulary is constructed by BPE, with the same size as the vocabulary after token removal.}
    \label{fig:vocab}
\end{figure}

Guiding tasks guide model initialization through simple tasks. 
Taking the CSP task as an example, as shown in Figures~\ref{fig:guiding_task_loss}, the model is initialized by learning CSP with an initially large loss. Then the CSP loss rapidly drops due to its simplicity. Then the well-initialized model can better understand the masked token prediction, thereby promoting convergence.
It is noteworthy that guiding-based initialization differs from traditional initialization methods (e.g., as shown in Figure~\ref{fig:guiding_task_loss}(a), Kaiming initialization fails to facilitate model convergence), demonstrating better robustness.

Furthermore, guiding tasks can significantly improve the model's performance. As shown in Table~\ref{tab:guiding_task}, although the Base$^*$ model with adjusted learning rate performs well, it fails to surpass guiding tasks variants. For instance, models with CSP outperform the Base$^*$ by 0.59\%, 1.17\%, and 2.48\% under $k$-mer, Word, and BPE, respectively.

\noindent\textbf{Q3.} \textbf{\textit{What are the advantages of Complementary Strand Prediction compared to other guiding tasks?} 
}

\noindent\textbf{A3.} 
{{(a)} The CSP task demonstrates stronger robustness}. As shown in Figure~\ref{fig:guiding_task_loss}(a), although the final pretraining loss of MST is close to that of CSP, the CSP loss decreases more smoothly during training. At the same time, the MST loss exhibits significant oscillations, indicating that CSP is more conducive to stabilizing the training process. 
{{(b)} The CSP task showed higher efficiency}. As shown in Table~\ref{tab:guiding_task}, under both Word and BPE tokenizers, the base model with CSP achieves the best performance. Compared with the second-best MST variants, CSP improves the average performance by 0.87\%.
{{(c)} The CSP task shows broader applicability}. Firstly, unlike FTM, which is only applicable to overlapping tokenizers, CSP is not restricted by tokenizers. Secondly, as shown in Table~\ref{tab:guiding_task}, CSP facilitates convergence across all variants, in contrast to SOP, which fails in BPE, and Kaiming initialization, which fails in Word and BPE.

These advantages stem from CSP design, which does not interfere with the optimization of MTP. Because it does not involve operations on the embedding of masked tokens, it avoids contamination of the token representation space. In contrast, MST introduces special tokens that do not appear in the natural distribution during training, which may mislead the model's learning of the real token distribution and thus interfere with the optimization direction of MTP, resulting in loss oscillations.

\begin{table}[!t]
  \centering
    \renewcommand{\arraystretch}{1.2}
    \scalebox{1}{
\begin{tabularx}{0.47\textwidth}{c|>{\centering\arraybackslash}X>{\centering\arraybackslash}X>{\centering\arraybackslash}X>{\centering\arraybackslash}X}
    \toprule
    Vocab Size & $k$-mer & Word  & BPE   & Average \\
    \midrule
    16    & \underline{83.09}  & \textbf{82.18 } & 79.57  & 81.61  \\
    64    & \underline{83.09}  & \underline{82.17}  & 82.78  & \textbf{82.68 } \\
    256   & 83.01  & 81.80  & \textbf{83.17 } & \underline{82.66}  \\
    1024  & \textbf{83.13 } & 81.70  & \underline{82.89}  & 82.57  \\
    4096  & 82.69  & 81.55  & 82.05  & 82.09  \\
    16384 & 82.62  & 80.48  & 82.06  & 81.72  \\
    \bottomrule
\end{tabularx}%
    }
  \caption{Average test performance~(\%) across different tokenizers and vocabulary sizes (excluding special tokens). 
  }
  \label{tab:vocab_size}%
\end{table}

\noindent\textbf{Q4.}\textbf{ \textit{What is the relationship between model performance and the DNA vocabulary?} }

\noindent\textbf{A4.} 
Not all tokens in a DNA vocabulary are necessary, especially in vocabularies constructed by BPE. Some tokens may even negatively impact performance. As shown in Figure~\ref{fig:vocab}, removing low-frequency tokens with high accuracy leads to a 1\% improvement in model performance; removing mid-frequency tokens also brings varying degrees of improvement. This suggests that some tokens may introduce redundant information or training noise. In contrast, high-frequency tokens with high accuracy usually carry more information and are essential to model performance; their removal causes a performance drop of about 0.5\%. We derive Principle~\ref{pri:token} based on these facts.

\begin{table}[!t]
  \centering
    \renewcommand{\arraystretch}{1.2}
    \scalebox{1}{
\begin{tabularx}{0.47\textwidth}{>{\centering\arraybackslash}X |c|>{\centering\arraybackslash}X>{\centering\arraybackslash}X>{\centering\arraybackslash}X>{\centering\arraybackslash}X}
    \toprule
    \multicolumn{1}{c|}{Freq.} & Acc. & $k$-mer  & Word  & BPE   & Avg. \\
    \midrule
    \multicolumn{2}{c|}{Base} & -     & -     & 82.38 & - \\
    \midrule
    \multirow{2}[2]{*}{High} & High  & 82.83 & 81.42 & 81.81 & 82.02 \\
          & Low   & \textbf{83.41} & 81.55 & -     & - \\
    \midrule
    \multirow{2}[2]{*}{Midium} & High  & 82.96 & 81.54 & 82.5  & 82.33 \\
          & Low   & 83.25 & \underline{81.59 }& 82.68 & 82.51 \\
    \midrule
    \multirow{2}[2]{*}{Low} & High  & 82.98 & \textbf{81.61} & \textbf{83.36} & \underline{82.65} \\
          & Low   & \underline{83.36 }& 81.52 & \underline{83.23} & \textbf{82.7} \\
    \bottomrule
\end{tabularx}%
    }
  \caption{Average test performance~(\%) across different tokenizers and token removal settings. 
  }
  \label{tab:vocab_construction}%
\end{table}

There is no linear correlation between model performance and vocabulary size. As shown in Table~\ref{tab:vocab_size}, the best-performing vocabulary sizes vary significantly across different tokenizers: the optimal sizes for $k$-mer, Word, and BPE tokenizers are 1024, 16, and 256, respectively. This indicates that the best vocabulary size depends on the tokenizer and the structural characteristics of DNA sequences.

The impact of the tokenization on token importance should not be overlooked. As shown in Table~\ref{tab:vocab_construction}, $k$-mer tokenizer introduces token overlapping, meaning that information from removed tokens can still be indirectly represented by others, resulting in minimal performance changes. In addition, the Word tokenizer directly segments the DNA sequence, severely disrupting its original structural integrity, which leads to worse performance compared to $k$-mer and BPE. Furthermore, the removal of tokens has a relatively limited impact. We derive Principle~\ref{pri:tokenization} based on these facts. Further experimental analysis on vocabulary size, etc., can be found in the supplementary material.

\section{Conclusion}


In this work, we systematically identify three overlooked issues in DNA sequence pretraining, including inappropriate downstream datasets, implementation flaws in the neighbor-masking strategy, and the lack of discussion for vocabulary construction, which fundamentally undermine the validity and effectiveness of existing methods. Through in-depth analysis and extensive empirical validation, we propose two objective criteria for downstream datasets selection, the novel concept of guiding tasks, and principles for constructing meaningful DNA vocabularies, to mitigate these issues. We finally introduce a standardized testbed that enables reproducible, rigorous benchmarking, which demonstrates the rationale of our findings and the effectiveness of our recommendations. Future work will focus on extending and promoting the testbed as a community-wide benchmarking standard to advance DNA pretraining.

\section{Acknowledgments}
This work was supported by the Strategic Priority Research Program of the Chinese Academy of Sciences under Grant No. XDA0460205.

{
    \small
    \bibliographystyle{ieeenat_fullname}
    \bibliography{main}

@article{encode2012integrated,
  title={An integrated encyclopedia of DNA elements in the human genome},
  author={ENCODE Project Consortium and others},
  journal={Nature},
  volume={489},
  number={7414},
  pages={57},
  year={2012}
}

@article{zhou2015predicting,
  title={Predicting effects of noncoding variants with deep learning--based sequence model},
  author={Zhou, Jian and Troyanskaya, Olga G},
  journal={Nature methods},
  volume={12},
  number={10},
  pages={931--934},
  year={2015},
  publisher={Nature Publishing Group US New York}
}

@article{hyenadna,
  title={Hyenadna: Long-range genomic sequence modeling at single nucleotide resolution},
  author={Nguyen, Eric and Poli, Michael and Faizi, Marjan and Thomas, Armin and Wornow, Michael and Birch-Sykes, Callum and Massaroli, Stefano and Patel, Aman and Rabideau, Clayton and Bengio, Yoshua and others},
  journal={Advances in neural information processing systems},
  volume={36},
  pages={43177--43201},
  year={2023}
}

@article{dnabert,
  title={DNABERT: pre-trained Bidirectional Encoder Representations from Transformers model for DNA-language in genome},
  author={Ji, Yanrong and Zhou, Zhihan and Liu, Han and Davuluri, Ramana V},
  journal={Bioinformatics},
  volume={37},
  number={15},
  pages={2112--2120},
  year={2021},
  publisher={Oxford University Press}
}

@article{NucleotideTransformer,
  title={Nucleotide Transformer: building and evaluating robust foundation models for human genomics},
  author={Dalla-Torre, Hugo and Gonzalez, Liam and Mendoza-Revilla, Javier and Lopez Carranza, Nicolas and Grzywaczewski, Adam Henryk and Oteri, Francesco and Dallago, Christian and Trop, Evan and de Almeida, Bernardo P and Sirelkhatim, Hassan and others},
  journal={Nature Methods},
  volume={22},
  number={2},
  pages={287--297},
  year={2025},
  publisher={Nature Publishing Group US New York}
}

@article{dnabert2,
  title={Dnabert-2: Efficient foundation model and benchmark for multi-species genome},
  author={Zhou, Zhihan and Ji, Yanrong and Li, Weijian and Dutta, Pratik and Davuluri, Ramana and Liu, Han},
  journal={arXiv preprint arXiv:2306.15006},
  year={2023}
}

@article{evo2,
  title={Genome modeling and design across all domains of life with Evo 2},
  author={Brixi, Garyk and Durrant, Matthew G and Ku, Jerome and Poli, Michael and Brockman, Greg and Chang, Daniel and Gonzalez, Gabriel A and King, Samuel H and Li, David B and Merchant, Aditi T and others},
  journal={BioRxiv},
  pages={2025--02},
  year={2025},
  publisher={Cold Spring Harbor Laboratory}
}

@article{MxDNA,
  title={Model decides how to tokenize: Adaptive dna sequence tokenization with mxdna},
  author={Qiao, Lifeng and Ye, Peng and Ren, Yuchen and Bai, Weiqiang and Liang, Chaoqi and Ma, Xinzhu and Dong, Nanqing and Ouyang, Wanli},
  journal={Advances in Neural Information Processing Systems},
  volume={37},
  pages={66080--66107},
  year={2024}
}

@article{bend,
  title={Bend: Benchmarking dna language models on biologically meaningful tasks},
  author={Marin, Frederikke Isa and Teufel, Felix and Horlacher, Marc and Madsen, Dennis and Pultz, Dennis and Winther, Ole and Boomsma, Wouter},
  journal={arXiv preprint arXiv:2311.12570},
  year={2023}
}

@article{Genomicbenchmarks,
  title={Genomic benchmarks: a collection of datasets for genomic sequence classification},
  author={Gre{\v{s}}ov{\'a}, Katar{\'\i}na and Martinek, Vlastimil and {\v{C}}ech{\'a}k, David and {\v{S}}ime{\v{c}}ek, Petr and Alexiou, Panagiotis},
  journal={BMC Genomic Data},
  volume={24},
  number={1},
  pages={25},
  year={2023},
  publisher={Springer}
}

@article{sievers2017k,
  title={K-mer content, correlation, and position analysis of genome DNA sequences for the identification of function and evolutionary features},
  author={Sievers, Aaron and Bosiek, Katharina and Bisch, Marc and Dreessen, Chris and Riedel, Jascha and Fro{\ss}, Patrick and Hausmann, Michael and Hildenbrand, Georg},
  journal={Genes},
  volume={8},
  number={4},
  pages={122},
  year={2017},
  publisher={MDPI}
}

@article{bpe,
  title={Neural machine translation of rare words with subword units},
  author={Sennrich, Rico and Haddow, Barry and Birch, Alexandra},
  journal={arXiv preprint arXiv:1508.07909},
  year={2015}
}

@article{zhang2023dnagpt,
  title={DNAGPT: A generalized pre-trained tool for versatile DNA sequence analysis tasks},
  author={Zhang, Daoan and Zhang, Weitong and Zhao, Yu and Zhang, Jianguo and He, Bing and Qin, Chenchen and Yao, Jianhua},
  journal={arXiv preprint arXiv:2307.05628},
  year={2023}
}

@article{FANTOM5,
  title={An atlas of active enhancers across human cell types and tissues},
  author={Andersson, Robin and Gebhard, Claudia and Miguel-Escalada, Irene and Hoof, Ilka and Bornholdt, Jette and Boyd, Mette and Chen, Yun and Zhao, Xiaobei and Schmidl, Christian and Suzuki, Takahiro and others},
  journal={Nature},
  volume={507},
  number={7493},
  pages={455--461},
  year={2014},
  publisher={Nature Publishing Group UK London}
}

@article{GENCODE,
  title={GENCODE 2021},
  author={Frankish, Adam and Diekhans, Mark and Jungreis, Irwin and Lagarde, Julien and Loveland, Jane E and Mudge, Jonathan M and Sisu, Cristina and Wright, James C and Armstrong, Joel and Barnes, If and others},
  journal={Nucleic acids research},
  volume={49},
  number={D1},
  pages={D916--D923},
  year={2021},
  publisher={Oxford University Press}
}

@article{JASPAR,
  title={JASPAR 2020: update of the open-access database of transcription factor binding profiles},
  author={Fornes, Oriol and Castro-Mondragon, Jaime A and Khan, Aziz and Van der Lee, Robin and Zhang, Xi and Richmond, Phillip A and Modi, Bhavi P and Correard, Solenne and Gheorghe, Marius and Barana{\v{s}}i{\'c}, Damir and others},
  journal={Nucleic acids research},
  volume={48},
  number={D1},
  pages={D87--D92},
  year={2020},
  publisher={Oxford University Press}
}

@article{ChIPAtlas,
  title={Ch IP-Atlas: a data-mining suite powered by full integration of public Ch IP-seq data},
  author={Oki, Shinya and Ohta, Tazro and Shioi, Go and Hatanaka, Hideki and Ogasawara, Osamu and Okuda, Yoshihiro and Kawaji, Hideya and Nakaki, Ryo and Sese, Jun and Meno, Chikara},
  journal={EMBO reports},
  volume={19},
  number={12},
  pages={e46255},
  year={2018}
}

@article{GENERator,
  title={GENERator: a long-context generative genomic foundation model},
  author={Wu, Wei and Li, Qiuyi and Li, Mingyang and Fu, Kun and Feng, Fuli and Ye, Jieping and Xiong, Hui and Wang, Zheng},
  journal={arXiv preprint arXiv:2502.07272},
  year={2025}
}

@article{BarcodeBERT,
  title={BarcodeBERT: Transformers for biodiversity analysis},
  author={Arias, Pablo Millan and Sadjadi, Niousha and Safari, Monireh and Gong, ZeMing and Wang, Austin T and Lowe, Scott C and Haurum, Joakim Bruslund and Zarubiieva, Iuliia and Steinke, Dirk and Kari, Lila and others},
  journal={arXiv preprint arXiv:2311.02401},
  year={2023}
}

@article{evo,
  title={Sequence modeling and design from molecular to genome scale with Evo},
  author={Nguyen, Eric and Poli, Michael and Durrant, Matthew G and Kang, Brian and Katrekar, Dhruva and Li, David B and Bartie, Liam J and Thomas, Armin W and King, Samuel H and Brixi, Garyk and others},
  journal={Science},
  volume={386},
  number={6723},
  pages={eado9336},
  year={2024},
  publisher={American Association for the Advancement of Science}
}

@article{su2024roformer,
  title={Roformer: Enhanced transformer with rotary position embedding},
  author={Su, Jianlin and Ahmed, Murtadha and Lu, Yu and Pan, Shengfeng and Bo, Wen and Liu, Yunfeng},
  journal={Neurocomputing},
  volume={568},
  pages={127063},
  year={2024},
  publisher={Elsevier}
}

@inproceedings{devlin2019bert,
  title={Bert: Pre-training of deep bidirectional transformers for language understanding},
  author={Devlin, Jacob and Chang, Ming-Wei and Lee, Kenton and Toutanova, Kristina},
  booktitle={Proceedings of the 2019 conference of the North American chapter of the association for computational linguistics: human language technologies, volume 1 (long and short papers)},
  pages={4171--4186},
  year={2019}
}

@article{ku2025systems,
  title={Systems and algorithms for convolutional multi-hybrid language models at scale},
  author={Ku, Jerome and Nguyen, Eric and Romero, David W and Brixi, Garyk and Yang, Brandon and Vorontsov, Anton and Taghibakhshi, Ali and Lu, Amy X and Burke, Dave P and Brockman, Greg and others},
  journal={arXiv preprint arXiv:2503.01868},
  year={2025}
}

@inproceedings{Unet,
  title={U-net: Convolutional networks for biomedical image segmentation},
  author={Ronneberger, Olaf and Fischer, Philipp and Brox, Thomas},
  booktitle={International Conference on Medical image computing and computer-assisted intervention},
  pages={234--241},
  year={2015},
  organization={Springer}
}

@book{MLP,
  title={Neural networks: a comprehensive foundation},
  author={Haykin, Simon},
  year={1998},
  publisher={Prentice Hall PTR}
}

@misc{GRCh38p14,
  author       = {{Genome Reference Consortium}},
  title        = {Human genome assembly GRCh38.p14 (GCF\_000001405.40)},
  year         = {2022},
  publisher    = {National Center for Biotechnology Information (NCBI)},
  howpublished = {\url{https://www.ncbi.nlm.nih.gov/assembly/GCF_000001405.40/}},
  note         = {Accessed: 2025-07-29}
}

@article{kingma2014adam,
  title={Adam: A method for stochastic optimization},
  author={Kingma, Diederik P and Ba, Jimmy},
  journal={arXiv preprint arXiv:1412.6980},
  year={2014}
}

@article{shaphiro1965analysis,
  title={An analysis of variance test for normality},
  author={Shaphiro, S and Wilk, MBJB},
  journal={Biometrika},
  volume={52},
  number={3},
  pages={591--611},
  year={1965}
}

@book{manning1999foundations,
  title={Foundations of statistical natural language processing},
  author={Manning, Christopher and Schutze, Hinrich},
  year={1999},
  publisher={MIT press}
}

@article{gardiner1987cpg,
  title={CpG islands in vertebrate genomes},
  author={Gardiner-Garden, Margaret and Frommer, Marianne},
  journal={Journal of molecular biology},
  volume={196},
  number={2},
  pages={261--282},
  year={1987},
  publisher={Elsevier}
}

@article{kouzarides2007chromatin,
  title={Chromatin modifications and their function},
  author={Kouzarides, Tony},
  journal={Cell},
  volume={128},
  number={4},
  pages={693--705},
  year={2007},
  publisher={Elsevier}
}

@article{radford2018improving,
  title={Improving language understanding by generative pre-training},
  author={Radford, Alec and Narasimhan, Karthik and Salimans, Tim and Sutskever, Ilya and others},
  year={2018},
  publisher={San Francisco, CA, USA}
}

@article{fawcett2006introduction,
  title={An introduction to ROC analysis},
  author={Fawcett, Tom},
  journal={Pattern recognition letters},
  volume={27},
  number={8},
  pages={861--874},
  year={2006},
  publisher={Elsevier}
}

@article{gorodkin2004comparing,
  title={Comparing two K-category assignments by a K-category correlation coefficient},
  author={Gorodkin, Jan},
  journal={Computational biology and chemistry},
  volume={28},
  number={5-6},
  pages={367--374},
  year={2004},
  publisher={Elsevier}
}

@book{hastie2009elements,
  title={The elements of statistical learning: data mining, inference, and prediction},
  author={Hastie, Trevor and Tibshirani, Robert and Friedman, Jerome H and Friedman, Jerome H},
  volume={2},
  year={2009},
  publisher={Springer}
}

@inproceedings{he2015delving,
  title={Delving deep into rectifiers: Surpassing human-level performance on imagenet classification},
  author={He, Kaiming and Zhang, Xiangyu and Ren, Shaoqing and Sun, Jian},
  booktitle={Proceedings of the IEEE international conference on computer vision},
  pages={1026--1034},
  year={2015}
}

@article{shannon1948mathematical,
  title={A mathematical theory of communication},
  author={Shannon, Claude E},
  journal={The Bell system technical journal},
  volume={27},
  number={3},
  pages={379--423},
  year={1948},
  publisher={Nokia Bell Labs}
}

@article{tishby2000information,
  title={The information bottleneck method},
  author={Tishby, Naftali and Pereira, Fernando C and Bialek, William},
  journal={arXiv preprint physics/0004057},
  year={2000}
}

@article{loshchilov2017decoupled,
  title={Decoupled weight decay regularization},
  author={Loshchilov, Ilya and Hutter, Frank},
  journal={arXiv preprint arXiv:1711.05101},
  year={2017}
}
}

%

\newpage

The supplementary material provides additional details to complement the main paper. 
\textbf{More details for method}, we further prove the information leakage issue of the neighbor-masking strategy and provide quantitative results. We also supplement the definitions of evaluation metrics, etc.  
\textbf{More details for datasets}, we offer more details about the characteristics and scales of downstream datasets.  
\textbf{More details for implementation details}, we supplement the parameter settings of each model.  
\textbf{More details for results and analysis}, we provide the complete data of the tables and figures in the main paper, along with additional analyses.

\section*{Method}
\subsection*{Two Implementation Flaws and Guiding Tasks}
$k$-mer tokenizer is widely used in DNA pretrain models. DNABERT mitigates the information leakage caused by overlapping tokenization by additionally masking the neighbors of masked tokens. However, this masking mechanism has two flaws. 
(a) Information leakage still exists. We present the following theorem:
\begin{theorem}
\label{the:difficulty}
Consider a sequence tokenized into overlapping $k$-mers, and suppose \( m \) consecutive tokens are masked, the leakage ratio $r$ is:
$$
    r=\begin{cases}
    100\%,&m \leq k - 1,\\
    \frac{k - 1}{m}\times 100\%, &   m > k - 1.
\end{cases}
$$
\end{theorem}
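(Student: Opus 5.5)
We would prove Theorem~\ref{the:difficulty} by a direct counting argument at the nucleotide level, after fixing a precise meaning for ``leakage ratio''. Write the sequence as $c_1c_2\cdots$ and let the overlapping $k$-mer tokens be $t_i=c_ic_{i+1}\cdots c_{i+k-1}$. Suppose the masked block is $t_{s+1},\dots,t_{s+m}$, with $t_s$ and $t_{s+m+1}$ unmasked (the block is maximal). Consecutive tokens overlap in $k-1$ nucleotides, so each token $t_i$ contributes exactly one nucleotide not already contained in $t_{i-1}$, namely its last nucleotide $c_{i+k-1}$. The information carried by the masked block, beyond what the left context $t_s$ already fixes, is therefore the $m$ nucleotides
\[
N=\{c_{s+k},\dots,c_{s+m+k-1}\},
\]
one per masked token. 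We take $r$ to be the fraction of $N$ that can be read off from visible tokens. We would state this convention explicitly at the start of the proof, since the theorem is only meaningful relative to it.

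\textbf{Step 1 (what is visible).} The right unmasked neighbour $t_{s+m+1}=c_{s+m+1}\cdots c_{s+m+k}$ reveals the nucleotides with indices in $[s+m+1,\,s+m+k]$. Tokens further to the right reveal only indices larger than $s+m+k-1$, and tokens to the left reveal only indices at most $s+k-1$. So the only element of the visible context that can intersect $N$ is $t_{s+m+1}$.

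\textbf{Step 2 (intersection count).} We compute
\[
N\cap\{c_{s+m+1},\dots,c_{s+m+k}\},
\]
whose indices form the interval
\[
[\max(s+k,\,s+m+1),\; s+m+k-1].
\]
\begin{itemize}
\item If $m\le k-1$, then $s+m+1\le s+k$, so the interval is all of $N$: every one of the $m$ masked nucleotides leaks, giving $r=100\%$.
\item If $m>k-1$, the interval is $[s+m+1,\,s+m+k-1]$, which has exactly $k-1$ elements, so $r=(k-1)/m$.
\end{itemize}
The two cases agree at $m=k-1$, which confirms that the piecewise formula is continuous.

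\textbf{Step 3 (consistency with Principle~\ref{pri:leakage}).} We would also check the boundary behaviour. With DNABERT's buggy neighbour set (about $k$ consecutive positions), $m\approx k$, so the leakage is close to $100\%$. Leakage vanishes only when the visible context no longer intersects the prediction targets. This is exactly the situation obtained by masking the $k-1$ neighbours on each side and excluding them from the loss, as Principle~\ref{pri:leakage} prescribes.

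\textbf{Main obstacle.} The arithmetic is trivial; the difficulty is definitional. One has to justify counting only the $m$ ``new'' nucleotides, and to explain why the symmetric contribution of the left neighbour $t_s$ is not counted as leakage: it is absorbed into the choice of reference, since $t_s$ fixes $c_{s+1},\dots,c_{s+k-1}$, which are not in $N$. A reader might instead expect a ratio such as $2(k-1)/(m+k-1)$, obtained by counting both sides over all covered nucleotides. We would therefore state the one-new-nucleotide-per-token convention up front and note that it is this convention that yields the stated formula.
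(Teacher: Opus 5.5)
Your argument is correct under the convention you fix, but it is not the paper's route. The paper counts per-token candidate spaces rather than distinct nucleotides. It bounds the entropy of the targets by $\sum_{i=1}^{m}\log|V^{(i)}|$ with $|V^{(i)}|=4^{k-\min(k,\max(0,k-i)+\max(0,k-m+i-1))}$. The two $\max$ terms count how many of the $k$ nucleotide slots of the $i$-th masked token are fixed by the left neighbour and by the right neighbour. It then sets $r=1-\max(H')/\max(H)$ with $\max(H)=m\log 4^k$.

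For $m\ge k-1$, each side fixes $1+2+\cdots+(k-1)=k(k-1)/2$ slots, so $r=k(k-1)/(mk)=(k-1)/m$. For $m\le k-1$, the two contributions add to $2k-1-m\ge k$ in every token, so $r=100\%$. The paper therefore treats both neighbours symmetrically as leaking, and weights each revealed nucleotide by the number of masked tokens containing it. This mirrors the per-token structure of $\mathcal{L}_{\text{MTP}}$ in Eq.~\eqref{eq:base_objective}. You instead count distinct nucleotides jointly and absorb the left neighbour into the reference.

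The two conventions agree for a concrete reason. Interior nucleotides have multiplicity $k$, while the $k-1$ edge nucleotides on each side have multiplicities $1,\dots,k-1$. So the slot-weighted known mass $k(k-1)$ is exactly $k$ times your count $k-1$, and the total $mk$ is $k$ times your $m$. This resolves your ``main obstacle'' more convincingly than an appeal to convention: unweighted symmetric counting gives $2(k-1)/(m+k-1)$, and slot weighting is what recovers the stated formula.

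Each route has its own merit. Yours is an honest joint quantity: under i.i.d.\ uniform nucleotides your $r$ equals $I(N;t_{s+m+1}\mid t_s)/H(N\mid t_s)$. The paper's ratio compares sums of marginal per-token bounds and ignores redundancy among the masked tokens themselves. In exchange, the paper's version is left--right symmetric and tied directly to the training loss.

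One correction to Step 1. For $k\ge 3$ the token $t_{s+m+2}$ contains $c_{s+m+2},\dots,c_{s+m+k-1}$, which lie in $N$. So it is false that tokens further right reveal only indices above $s+m+k-1$. What is true, and suffices, is that every visible token to the right of the block meets $N$ inside the window $[s+m+1,s+m+k]$ of $t_{s+m+1}$. Hence the union of revealed elements of $N$ is unchanged.
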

\noindent\textbf{Proof.} Suppose there exists a token sequence $\bm{x} = (t_1, t_2, \dots, t_m)$. We replace a continuous span of $m$ tokens in $\bm{x}$ with the [MASK] token, resulting in a masked sequence $\bm{\tilde{x}} = (t_1, \text{[MASK]}, \ldots, t_m)$, where the set of tokens to be predicted is denoted as $\mathcal{M}$. The conditional entropy of the model's prediction for $\mathcal{M}$ is given by:
\begin{gather}
H(\mathcal{M}|\bm{\tilde{x}}) = -\sum_{t_i \in \mathcal{M}}\sum_{t \in \mathcal{V}} p \log p,
\end{gather}
where $p = P(t_i = t | \bm{\tilde{x}})$ and $\mathcal{V}$ denotes the vocabulary. According to the Maximum Entropy Theorem~\cite{shannon1948mathematical}, the maximum value of the entropy above is $m\log|\mathcal{V}|$. Entropy reflects the uncertainty of the prediction target, i.e., the "intrinsic complexity" of the learning task~\cite{tishby2000information}.

Next, we consider the case where the input token sequence $\bm{x}$ is derived from the original DNA sequence using a $k$-mer tokenizer. To formally analyze the potential information leakage introduced by the $k$-mer tokenizer, we first illustrate with a concrete example:

Let token $t_i = (c_{i,1}, c_{i,2}, \dots, c_{i,k})$ and its preceding token $t_{i-1} = (c_{i-1,1}, c_{i-1,2}, \dots, c_{i-1,k})$. Due to the sliding-window nature of $k$-mer tokenization, we have $c_{i,j} = c_{i-1,j+1}$ for all $1 \leq j \leq k-1$. Thus, $t_{i-1}$ can be rewritten as $(c_{i-1,1}, c_{i,1}, \dots, c_{i,k-1})$, indicating that $t_i$ and $t_{i-1}$ differ by only one nucleotide, $c_{i-1,1}$.
Under this condition, if $t_{i-1}$ is not masked, then when predicting $t_i$, the model's search space is reduced from the original $|\mathcal{V}| = 4^k$ to a size of 4, as only the last nucleotide of $t_i$ remains uncertain. In other words, the $k$-mer structure significantly reduces the prediction difficulty when adjacent tokens are not masked.

When masking $m$ consecutive tokens, the entropy can be approximately expressed as:
\begin{align}
H'(\mathcal{M}|\bm{\tilde{x}}) &=
-\sum_{t_i\in \mathcal{M}}\sum_{t\in V^{(i)}}p\log p, \\
&\le -\sum_{t_i\in \mathcal{M}}\log|V^{(i)}|, \\
|V^{(i)}| &= 4^{k - \min\left(k,\max(0,k-i)+\max(0,k-m+i-1)\right)},
\end{align}
where $V^{(i)}$ is the effective candidate space for predicting $t_i$ given $\bm{\tilde{x}}$. Its size satisfies $4 \leq |V^{(i)}| \leq 4^k$, which can be directly derived from the structure of $k$-mers.

Therefore, we can derive the ratio between the maximum entropy under information leakage and the original maximum entropy as:
\begin{gather}
\frac{\max(H')}{\max(H)} =
\begin{cases}
0, & m \leq k - 1, \\
\frac{m - k + 1}{m}, & m > k - 1.
\end{cases}
\end{gather}

Hence, the proportion of information leakage can be expressed as $1 - \frac{\max(H')}{\max(H)}$, which quantifies the reduction in prediction difficulty introduced by $k$-mer tokenization, and reveals its potential to cause leakage in pretraining.

The above theorem indicates that information leakage exists regardless of the length of the consecutive masked token sequence. Such leakage weakens the pretraining model's ability to learn the semantics of DNA and instead encourages it to exploit the $k$-mer tokenization. In DNABERT~\cite{dnabert}, consecutively masking $k$ tokens leads to an information leakage rate of $r = \frac{k - 1}{k} \times 100\%$, which essentially reduces the task to masked nucleotide prediction.


\subsection*{DNA Sequence Segmentation}
In real-world DNA sequences obtained from biological experiments, the character ``N'' is often used to denote an unknown nucleotide. Traditional tokenization methods, such as $k$-mer, typically treat sequences containing ``N'' as out-of-vocabulary and map them to a special [UNK] token.

To systematically investigate the impact of ``N'' nucleotides, we conduct ablation studies on different strategies for handling ``N''. A straightforward approach is to remove all ``N'' characters from the sequences before tokenization. Another intuitive and biologically reasonable strategy is to define dedicated tokens for subsequences containing ``N''. To support this, we redesign the $k$-mer and Word tokenization by incorporating a greedy matching mechanism that identifies and preserves substrings involving ``N''. The modified tokenization procedure is described in Algorithm~\ref{alg:segment_dna}, where the input $T$ denotes a priority queue reverse ordered by the length of ``N''-containing token candidates.

\subsection*{Vocabulary Construction}
We experimentally found that not all tokens in mainstream vocabularies contribute to the pretraining model. After removing certain tokens, the model's capability even improves. Therefore, we define the vocabulary after pruning certain tokens as follows:
\begin{gather}
    \mathcal{V}_{\text{Cull}}=\mathcal{V}\setminus \mathcal{V}_{0}\cup\{[\text{CULL}]\}
\end{gather}
where \(\mathcal{V}\) is the vocabulary from which tokens are to be pruned, \(\mathcal{V}_{0}\) denotes the tokens to be removed, and \([\text{CULL}]\) is a newly added token that replaces the pruned tokens in the token sequence.

\begin{algorithm}[!ht]
\caption{Segment DNA Sequence with ``N''}
\label{alg:segment_dna}
\textbf{Input}: DNA sequence $\textit{seq}$, $k$-mer, ``N'' token priority list $\textit{T}$\\
\textbf{Output}: Token list $\textit{result}$\\
\vspace{-8pt}
\begin{algorithmic}[1]
\STATE Initialize empty list $\textit{result}$, set $i \gets 0$, $n \gets \text{Length}(\textit{seq})$
\WHILE{$i < n$}
    \STATE $j \gets i$
    \IF{$\textit{seq}[i] = \text{`N'}$}
        \WHILE{$j < n$ and $\textit{seq}[j] = \text{`N'}$}
            \STATE $j \gets j + 1$
        \ENDWHILE
        \STATE $\textit{len} \gets j - i$
        \FOR{each token $t$ in $T$}
            \STATE $m \gets \left\lfloor \frac{\textit{len}}{\text{Length}(t)} \right\rfloor$
            \IF{$m > 0$}
                \FOR{$1$ to $m$} \STATE Append $t$ to $\textit{result}$ \ENDFOR
                \STATE $i \gets i + m \cdot \text{Length}(t)$
                \STATE \textbf{break}
            \ENDIF
        \ENDFOR
    \ELSE
        \WHILE{$j < n$ and $\textit{seq}[j] \ne  \text{`N'}$}
            \STATE $j \gets j + 1$
        \ENDWHILE
        \FOR{$k' = i$ to $j - k$ step $k$}
            \STATE Append $\textit{seq}[k':k'+k]$ to $\textit{result}$
        \ENDFOR
        \STATE $i \gets j$
    \ENDIF
\ENDWHILE
\STATE \textbf{return} $\textit{result}$
\end{algorithmic}
\end{algorithm}

\begin{table*}[!ht]
  \centering
      \renewcommand{\arraystretch}{1.2}
          \scalebox{0.83}{
\begin{tabularx}{1.2\textwidth}{>{\centering\arraybackslash}X|l|>{\centering\arraybackslash}X>{\centering\arraybackslash}X>{\centering\arraybackslash}X>{\centering\arraybackslash}X>{\centering\arraybackslash}X}
    \toprule
    Benchmark & \multicolumn{1}{c|}{Dataset} & Aliase & \# Samples & Length range & Metric & \# Classes \\
    \midrule
    \multirow{3}[2]{*}{BEND} & Chromatin Accessibility & CA    & 2005617 & 512   & AUROC & 125 \\
          & Cpg Methylation & CpG   & 959039 & 512   & AUROC & 7 \\
          & Histone Modification & HM    & 612081 & 512   & AUROC & 18 \\
    \midrule
    \multirow{5}[2]{*}{Genomic} & Enhancers Cohn & EC    & 27791 & 500   & Accuracy & 2 \\
          & Enhancers Ensembl & EE    & 154842 & 269$\pm 122.6$ & Accuracy & 2 \\
          & Ensembl Regulatory & ER    & 289061 & 401$\pm 184.3$ & Accuracy & 3 \\
          & NonTata Promoters & NTP   & 36131 & 251   & Accuracy & 2 \\
          & OCR Ensembl & OCR   & 174756 & 315$\pm 108.1$ & Accuracy & 2 \\
    \midrule
    \multicolumn{1}{c|}{\multirow{18}[2]{*}{Nucleotide}} & Enhancers & NE    & 33000 & 400   & AUROC & 2 \\
          & Enhancers Types & NET   & 33000 & 400   & Accuracy & 3 \\
          & H2AFZ & Z     & 33000 & 1000  & MCC   & 2 \\
          & H3K27ac & K27a  & 31616 & 1000  & MCC   & 2 \\
          & H3K27me3 & K27m3 & 33000 & 1000  & MCC   & 2 \\
          & H3K36me3 & K36m3 & 33000 & 1000  & MCC   & 2 \\
          & H3K4me1 & K4m1  & 33000 & 1000  & MCC   & 2 \\
          & H3K4me2 & K4m2  & 32138 & 1000  & MCC   & 2 \\
          & H3K4me3 & K4m3  & 30776 & 1000  & MCC   & 2 \\
          & H3K9ac & K9a   & 24278 & 1000  & MCC   & 2 \\
          & H3K9me3 & K9m3  & 28288 & 1000  & MCC   & 2 \\
          & H4K20me1 & K20m1 & 32270 & 1000  & MCC   & 2 \\
          & Promoter All & PA    & 31584 & 300   & AUROC & 2 \\
          & Promoter NoTata & PNT   & 31372 & 300   & AUROC & 2 \\
          & Promoter Tata & PT    & 5274  & 300   & AUROC & 2 \\
          & Splice Sites Acceptor & SSAcc & 33000 & 600   & AUROC & 2 \\
          & Splice Sites All & SSAll & 33000 & 600   & Accuracy & 3 \\
          & Splic Sites Donor & SSD   & 33000 & 600   & AUROC & 2 \\
    \bottomrule
    \end{tabularx}%
    }
    \caption{Summary of scale information for each downstream dataset. ``BEND'', ``Genomic'', and ``Nucleotide'' refer to the BEND benchmark~\cite{bend}, Genomic benchmark~\cite{Genomicbenchmarks}, and Nucleotide Transformer benchmark~\cite{NucleotideTransformer}, respectively.}
  \label{tab:datasets}%
\end{table*}%

\subsection*{Evaluation Metrics}
To comprehensively assess model performance, we adopt three widely used evaluation metrics: Area Under the Receiver Operating Characteristic Curve (AUROC), Accuracy, and Matthews Correlation Coefficient (MCC).

\paragraph{AUROC.} AUROC measures the area under the ROC curve, which plots the true positive rate (TPR) against the false positive rate (FPR) at various threshold settings. It evaluates the model's ability to distinguish between positive and negative samples, and is especially robust in scenarios with class imbalance. An AUROC of 1.0 indicates perfect separation, while 0.5 corresponds to random guessing.

\paragraph{Accuracy.} Accuracy is the ratio of correctly predicted instances to the total number of instances. 

\paragraph{Matthews Correlation Coefficient (MCC).} MCC is a balanced measure that accounts for all four elements of the confusion matrix. It is particularly useful in evaluating binary classification performance on imbalanced data. MCC is defined as:
\begin{gather}
\scalebox{1}{$\text{MCC} = \frac{TP \times TN - FP \times FN}
{\sqrt{(TP + FP)(TP + FN)(TN + FP)(TN + FN)}},$}
\end{gather}
where $TP$, $TN$, $FP$, and $FN$ denote the number of true positives, true negatives, false positives, and false negatives, respectively. MCC ranges from $-1$ (perfect inverse prediction) to $1$ (perfect prediction), with 0 indicating no better than random guessing.

\begin{table*}[!htbp]
  \centering
    \renewcommand{\arraystretch}{1.2}
    \scalebox{0.83}{
\begin{tabularx}{1.2\textwidth}{c|l|>{\centering\arraybackslash}X>{\centering\arraybackslash}X>{\centering\arraybackslash}X>{\centering\arraybackslash}X>{\centering\arraybackslash}X>{\centering\arraybackslash}X>{\centering\arraybackslash}X>{\centering\arraybackslash}X}
    \toprule
    \multicolumn{1}{c|}{Tokenizer} & Variants & CA    & CpG   & HM    & EC    & NE    & PA    & PNT   & Average \\
    \midrule
    \multirow{3}[0]{*}{$k$-mer} & Base  & 70.33 & 88.8  & 76.81 & 71.29 & 82.47 & 94.39 & 94.71 & 82.69 \\
       & w/o N & 67.65 & 85.75 & 75.8  & 70.68 & 80.92 & 93.7  & 94.85 & 81.34 \\
        & Seg N & 69.61 & 87.17 & 76.25 & 71.79 & 81.95 & 94.23 & 94.87 & 82.27 \\
    \midrule
    \multirow{3}[0]{*}{Word} & Base  & 68.81 & 86.61 & 75.4  & 70.45 & 81.11 & 94.02 & 94.42 & 81.55 \\
       & w/o N & 67.41 & 85.1  & 74.48 & 69.14 & 79.16 & 93.37 & 94.22 & 80.41 \\
        & Seg N & 67.81 & 86.38 & 75.23 & 69.73 & 80.35 & 93.58 & 94.63 & 81.1 \\
    \bottomrule
\end{tabularx}%
    }
  \caption{Ablation studies on three methods for handling ``N'' nucleotides. Test performance~(\%) (Accuracy, AUROC, or MCC) on selected datasets. ``Base'' denotes processing substrings containing ``N'' as [UNK] token. ``w/o N'' directly removes ``N'' nucleotides from the pretraining dataset. ``Seg N'' applies Algorithm~\ref{alg:segment_dna} to segment them into ``N'' tokens.
  }
  \label{tab:ablation_N}%
  \vspace{-4pt}
\end{table*}

\underline

\section*{Datasets}
The following introduces the downstream datasets that are widely used for evaluating DNA pretraining models. These datasets span various biological tasks and have been adopted in several recent studies to benchmark the effectiveness of sequence representations learned during pretraining.

\noindent\textbf{BEND Benchmark}~\cite{bend} introduces a collection of downstream tasks in the field of regulatory genomics, designed to evaluate the ability of models to capture sequence-level functional signals. We select three supervised binary classification tasks from this benchmark:
\begin{itemize}
    \item \textbf{Chromatin Accessibility Prediction (CA):} This task requires the model to predict whether a given DNA sequence is located within an open chromatin region across diverse cell types. Open chromatin regions are key indicators of regulatory activity. The dataset contains a total of 2,005,617 samples.
    
    \item \textbf{Histone Modification Prediction (HM):} A multi-label task in which the model must determine whether certain histone modifications (e.g., H3K27ac, H3K4me3) occur at specific genomic loci. These modifications play essential roles in gene regulation. The HM dataset includes 612,081 DNA sequence samples.
    
    \item \textbf{CpG Methylation Prediction (CpG):} This task involves predicting the methylation state of CpG sites in the genome, which is critical for epigenetic regulation. The dataset includes 959,039 samples collected from various cell lines and methylation profiles.
\end{itemize}
We use the AUROC as the evaluation metric, following the original benchmark setting.

\noindent\textbf{Genomic Benchmark}~\cite{Genomicbenchmarks} provides a diverse suite of classification datasets that can be grouped into two major tasks:
\begin{itemize}
    \item \textbf{Regulatory Annotation:} This task focuses on identifying regulatory DNA elements such as promoters and enhancers, which are central to transcriptional control. It includes four datasets: Enhancers Cohn~(EC), Enhancers Ensembl~(EE), Ensembl Regulatory~(ER), and NonTATA Promoters~(NTP).

    \item \textbf{Chromatin Accessibility Prediction:} Similar to the CA task from BEND, this task uses the OCR Ensembl~(OCR) dataset, which consists of 174,756 samples, to predict open chromatin regions from sequence input.
\end{itemize}
Following prior works~\cite{hyenadna, MxDNA}, accuracy is used as the evaluation metric.

\noindent\textbf{Nucleotide Transformer Benchmark}~\cite{NucleotideTransformer} evaluates the generalization ability of pretrained DNA language models across a broader set of tasks and datasets. It covers three main categories:
\begin{itemize}
    \item \textbf{Histone Modification Prediction:} This category includes 10 binary classification datasets targeting different histone marks: H2AFZ (Z), H3K27ac (K27a), H3K27me3 (K27m3), H3K36me3 (K36m3), H3K4me1 (K4m1), H3K4me2 (K4m2), H3K4me3 (K4m3), H3K9ac (K9a), H3K9me3 (K9m3), and H4K20me1 (K20m1). These datasets are evaluated using MCC, which is suitable for imbalanced classification.

    \item \textbf{Regulatory Annotation:} This includes five datasets focused on identifying regulatory elements: Enhancers (NE), Enhancer Types (NET), Promoter All (PA), Promoter NoTATA (PNT), and Promoter TATA (PT). AUROC is used as the evaluation metric for all datasets, except for NET, which uses accuracy.

    \item \textbf{Splice Site Annotation:} This task involves detecting splice junctions—the boundaries between exons and introns—based on local sequence context. The datasets include Splice Sites Acceptor (SSAcc), Splice Sites All (SSAll), and Splice Sites Donor (SSD). AUROC is used as the metric for SSAcc and SSD, while SSAll is evaluated using accuracy due to its multi-class setting.
\end{itemize}

\begin{figure*}[!ht]
    \centering
    \includegraphics[width=1\linewidth]{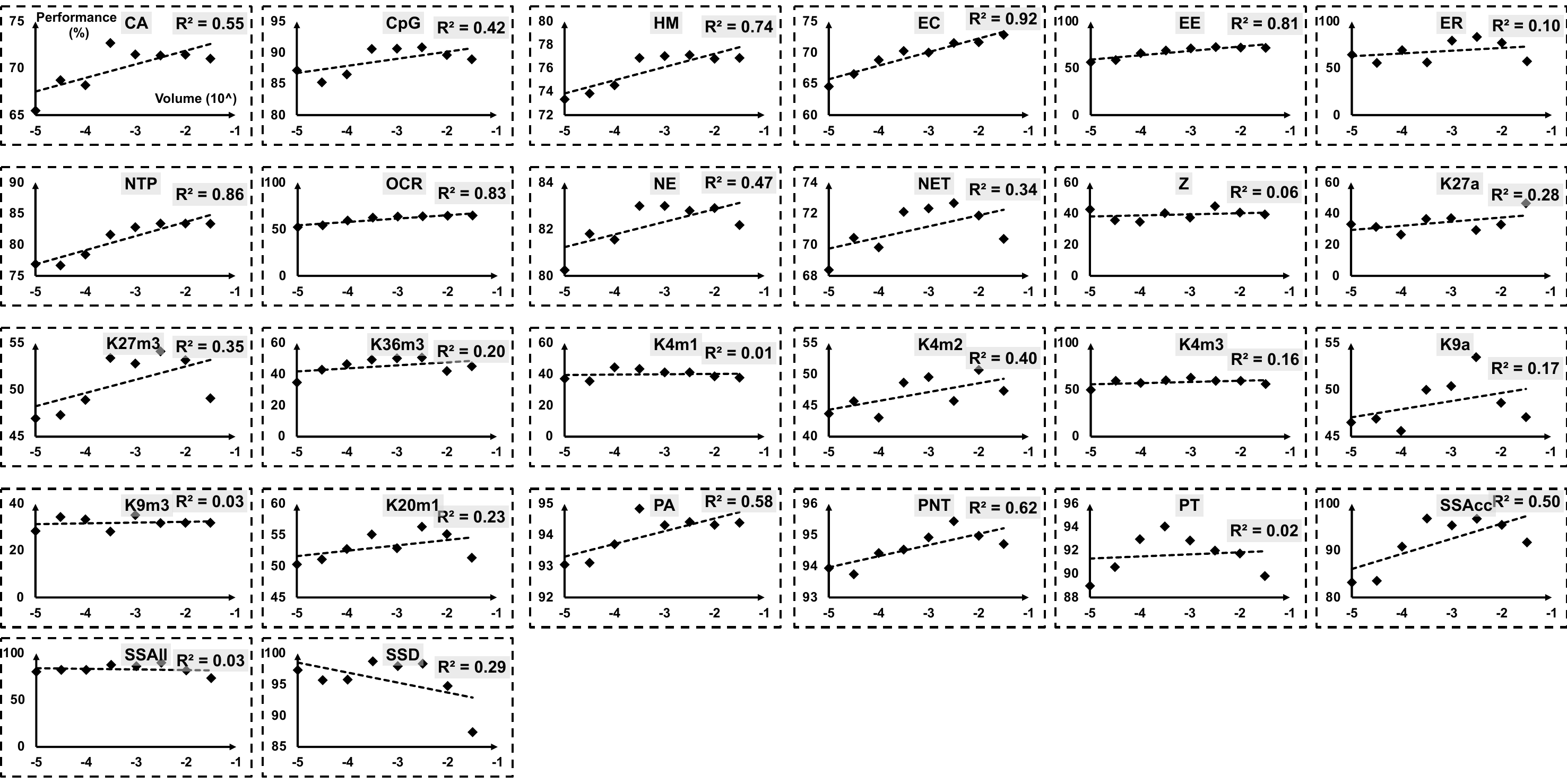}
    \caption{Supplementary for Figure 3 in the main paper. The scaling law adherence status of all 26 downstream datasets. We model the scaling law in Criteria 2 using linear regression~\cite{hastie2009elements}.}
    \label{fig:sup_scaling_law}
\end{figure*}
\begin{figure*}[!ht]
    \centering
    \includegraphics[width=1\linewidth]{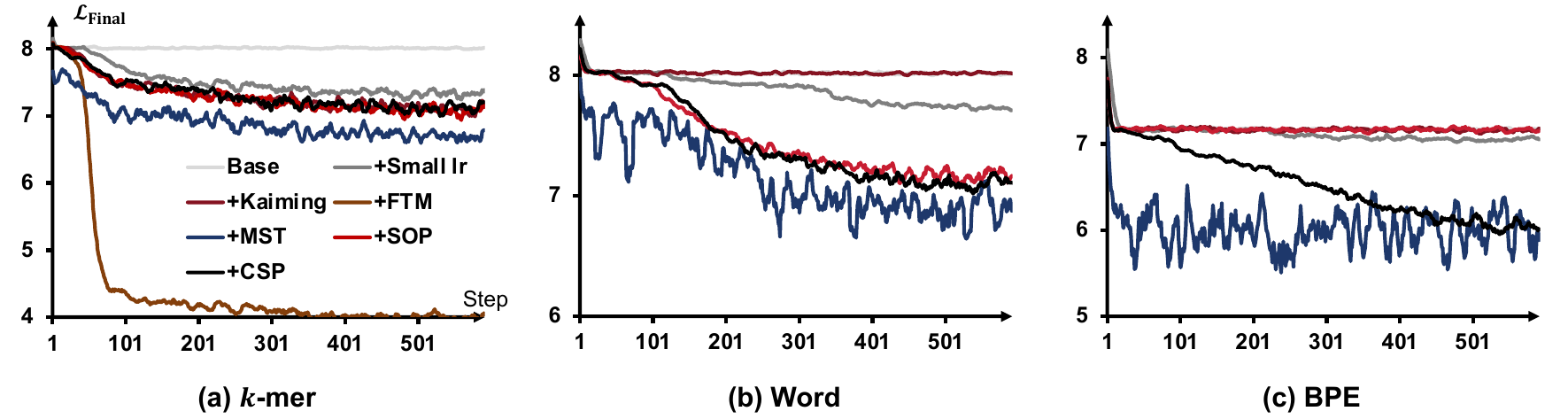}
    \caption{Supplementary for Figure 4 in the main paper. Visualization of pretraining loss curves for different guiding tasks and different tokenizers. ``Base'' indicates that DNABERT is pretrained at a learning rate of $4e-4$.}
    \label{fig:sup_guiding_task_loss}
\end{figure*}

\section*{Implementations}
\noindent\textbf{Criteria.} We obtain the estimates of the mean $\mu$ and variance $\sigma^2$ of $\log(\sigma_{\mathcal{D}})$ through extensive experiments. We evaluate the performance of three baselines: Multilayer Perceptrons (MLP)~\cite{MLP}, Convolutional Neural Networks (CNN)~\cite{bend}, and Unet~\cite{Unet}, and compare them with DNABERT to verify pretraining benefit. We model the scaling law using linear regression~\cite{hastie2009elements}.

\noindent\textbf{Guiding Task.}
We conduct experiments based on the fixed version with different guiding tasks, and include the Kaiming initialization~\cite{he2015delving} method for comparison. The probability of a reverse in SOP is set to 0.01.

\noindent\textbf{Vocabulary Construction.}
All BPE vocabularies are trained on the human genome data~\cite{GRCh38p14}. In the token ablation experiments, the removed tokens are replaced with the new $\text{[CUL]}$ token, noting that $\text{[CUL]}$ is not a special token. And the number of removed tokens does not exceed 10\% of the raw vocabulary.

\noindent\textbf{Models.}
Unless otherwise specified, all $k$-mer settings use $k=6$. DNABERT and DNABERT2 are configured with a maximum position embedding of 512, a hidden dimension of 256, and a total of eight Transformer encoder layers.  
HyenaDNA uses a maximum position embedding of 8192, a hidden dimension of 256, and eight hidden layers.  
The CNN model initializes sequences using the BPE and contains two CNN layers with a hidden dimension of 256. The MLP model consists of two hidden layers with a dimension of 1024. The U-Net model uses a hidden dimension of 32.

\noindent\textbf{Experiments.}
All pretrainings use human genome data and were conducted on NVIDIA A800 GPUs with a batch size of 64 over 100 epochs. We use the AdamW optimizer~\cite{loshchilov2017decoupled} for optimization, with a linear warmup schedule where the number of warmup epochs is set to 10. The learning rate is $4 \times 10^{-4}$. The masking probability is 0.11.
All test experiments were conducted on NVIDIA A800 GPUs with 100 epochs of three different seeds, using the Adam optimizer. Following prior work~\cite{bend}, we freeze the pretraining model's backbone and only fine-tune a CNN with a learning rate of $5 \times 10^{-4}$. Other models undergo full training with multiple learning rates $\{1\!\times\!10^{-3}, 5\!\times\!10^{-4}, 1\!\times\!10^{-4}\}$. Different datasets use different batch sizes. For samples whose length exceeds the model's maximum position embedding, we split them into segments, feed each segment into the model separately, and then concatenate the resulting outputs to obtain the final embeddings.

\begin{figure*}
    \centering
    \includegraphics[width=1\linewidth]{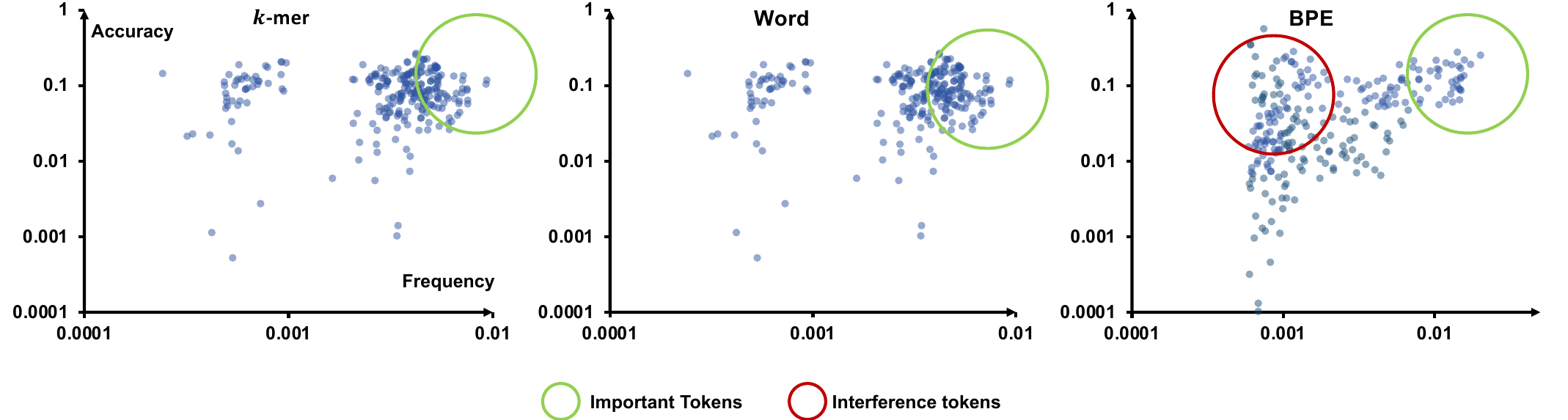}
    \caption{Visualization of token distribution. All vocabulary sizes are 256, excluding special tokens. “Accuracy” refers to the token prediction accuracy of the model.}
    \label{fig:sup_vocab}
\end{figure*}

\section*{Experiments}
\subsection*{Complete Results of The Two Criteria}
\noindent\textbf{Full Table of the Two Criteria.}
In Table~\ref{tab:stability} and \ref{tab:validity}, we provide the complete data corresponding to Table 1 and Table 2 in the main text. The full tables include stability and validity results for all 26 downstream datasets. Additionally, the scaling laws for all datasets are illustrated in Figure~\ref{fig:sup_scaling_law}, where it can be observed that datasets such as ER and PT do not satisfy our proposed Criteria 2. Finally, we selected seven datasets that are both stable and valid: CA, CpG, HM, EC, NE, PA, and PNT.

\noindent\textbf{SOTA Pretraining Model Performance.}
As shown in Table~\ref{tab:performance}, DNABERT achieves the best overall performance. DNABERT2, which adopts a BPE vocabulary, fails to reproduce the performance reported in its original paper, likely due to the incompatibility of the vocabulary. As shown in Table~\ref{tab:vocab_size}, BPE-based variants perform comparably to the $k$-mer tokenizer. HyenaDNA exhibits inferior performance on tasks such as CpG and EC compared to other SOTA models, resulting in the lowest average performance.

\subsection*{Guiding Tasks}
FTM demonstrates strong capability in facilitating pretraining convergence. As shown in Figure~\ref{fig:sup_guiding_task_loss} (a), and following Theorem~\ref{the:difficulty}, FTM significantly reduces the difficulty of the masked token prediction task, leading to a rapid decrease in loss. As shown in Table~\ref{tab:guiding_task}, among all $k$-mer variants, FTM achieves the best performance. However, due to its reliance on information leakage, FTM is only applicable to tokenizers with overlapping structures, such as $k$-mer.

In contrast, CSP and MST have broader applicability. As illustrated in Figure~\ref{fig:sup_guiding_task_loss}, both CSP and MST effectively promote convergence during pretraining across all three tokenizer variants: $k$-mer, Word, and BPE. This is because the objectives of CSP and MST are independent of the tokenizer design. Between the two, CSP exhibits more stable behavior with a smoother pretraining loss curve, whereas MST tends to fluctuate more severely.

\subsection*{Vocabulary Construction}
\noindent\textbf{Vocabulary Size.} There is no linear correlation between model performance and vocabulary size. As shown in Table~\ref{tab:vocab_size}, the best-performing vocabulary sizes vary significantly across different tokenizers: the optimal sizes for $k$-mer, Word, and BPE tokenizers are 1024, 16, and 256, respectively. This indicates that the best vocabulary size depends on the tokenizer and the structural characteristics of DNA sequences.

\noindent\textbf{``N'' Nucleotides.}
``N'' nucleotides are beneficial for model learning. As shown in Table~\ref{tab:ablation_N}, the ``w/o N'' variant performs the worst, indicating that these ``N'' nucleotides still contain semantic information in DNA. The ``Seg N'' variant performs 0.4\% worse than the base, suggesting that there remains considerable room for exploration regarding how to tokenize ``N'' nucleotides effectively.

\noindent\textbf{Token Importance.}
We supplement the token importance visualizations for $k$-mer and Word vocabularies in Figure~\ref{fig:sup_vocab}. Combined with the statistics in Table~\ref{tab:vocab_construction}, tokens with high frequency and high prediction accuracy are essential to the model, i.e., removing them leads to noticeable performance drops: 0.08\%, 0.38\%, and 1.36\% for $k$-mer, Word, and BPE, respectively, compared to the raw model. In contrast, low-frequency but high-accuracy tokens are often noisy, i.e., their inclusion may degrade model performance. For example, removing such tokens improves BPE variant performance by 0.98\% and 0.19\% compared to the base and raw models, respectively. Based on these observations, we propose Principle 3 in the main text.

As shown in Table~\ref{tab:vocab_construction}, the $k$-mer tokenizer introduces overlapping tokens, which allows information from removed tokens to be partially retained through neighboring tokens. This redundancy mitigates the impact of token removal and results in only minor performance degradation. On the other hand, the Word tokenizer segments DNA sequences in a non-overlapping manner, which substantially disrupts the inherent structural continuity of the sequence, leading to significantly worse performance compared to $k$-mer and BPE. Additionally, token removal has a relatively limited effect on performance across all tokenizers.


\end{document}